%% file: main.tex
\documentclass{article}

 \PassOptionsToPackage{numbers, compress}{natbib}
\usepackage[final]{neurips_2025}

\usepackage[utf8]{inputenc} % allow utf-8 input
\usepackage[T1]{fontenc}    % use 8-bit T1 fonts
\usepackage{url}            % simple URL typesetting
\usepackage{booktabs}       % professional-quality tables
\usepackage{amsfonts}       % blackboard math symbols
\usepackage{nicefrac}       % compact symbols for 1/2, etc.
\usepackage{microtype}      % microtypography
\usepackage{xcolor}         % colors

\usepackage{amsthm}
\newtheorem{theorem}{Theorem}

\usepackage{algorithm}
\usepackage{algpseudocode}
\usepackage{listings} % 加载 listings 包

\usepackage{amsmath} 
\usepackage{amssymb}
\usepackage{graphicx}       % for including graphics
\usepackage{hyperref}
\usepackage{caption}
\usepackage{makecell}

\title{
SYMPHONY: Synergistic Multi-agent Planning with Heterogeneous Language Model Assembly
}

\author{%
  Wei Zhu 
  \quad
  Zhiwen Tang\thanks{Corresponding author} 
  \quad
  Kun Yue \\
  School of Information Science and Engineering, Yunnan University, Kunming, China \\
  Yunnan Key Laboratory of Intelligent Systems and Computing, Kunming, China \\
  \texttt{zhuwei@stu.ynu.edu.cn, \{zhiwen.tang, kyue\}@ynu.edu.cn} \\
}

\begin{document}

\maketitle

\begin{abstract}
Recent advancements have increasingly focused on leveraging large language models (LLMs) to construct autonomous agents for complex problem-solving tasks. However, existing approaches predominantly employ a single-agent framework to generate search branches and estimate rewards during Monte Carlo Tree Search (MCTS) planning. This single-agent paradigm inherently limits exploration capabilities, often resulting in insufficient diversity among generated branches and suboptimal planning performance.
To overcome these limitations, we propose  \textbf{SY}nergistic \textbf{M}ulti-agent \textbf{P}lanning with \textbf{H}eter\textbf{O}geneous la\textbf{N}gauge model assembl\textbf{Y} (\textbf{SYMPHONY} \footnote{Code is available at \url{https://github.com/ZHUWEI-hub/SYMPHONY}}),  a novel multi-agent planning framework that integrates a pool of heterogeneous language model-based agents. 
By leveraging diverse reasoning patterns across agents, SYMPHONY enhances rollout diversity and facilitates more effective exploration.
Empirical results across multiple benchmark tasks show that SYMPHONY achieves strong performance even when instantiated with open-source LLMs deployable on consumer-grade hardware. When enhanced with cloud-based LLMs accessible via API, SYMPHONY demonstrates further improvements, outperforming existing state-of-the-art baselines and underscoring the effectiveness of heterogeneous multi-agent coordination in planning tasks.

\end{abstract}

\input{intro}

\input{related_work}

\input{method_new}

\input{exp}

% \section{Limitation} \label{sec:limit}
% One current limitation of our work lies in the reliance on manually tuned hyperparameters, which may vary across different tasks to achieve optimal performance and cost-efficiency.  It highlights the need for more robust and automated hyperparameter optimization strategies. We view this as a promising direction for future work and expect that integrating adaptive tuning methods will further enhance the generality and usability of our approach.

% Our framework has several limitations. First, it heavily relies on the LLM's ability to accurately assess the current reasoning state, which contributes to the performance gap observed between SYMPHONY-S and SYMPHONY-L. In addition, users are required to configure certain hyperparameters, such as the maximum number of expansions and trajectories. The optimal values of these parameters may vary across different tasks.

\section{Conclusion and Future Work}
We present SYMPHONY, a multi-agent planning framework that combines MCTS with a diverse pool of language models. By leveraging model heterogeneity and incorporating adaptive scheduling, entropy-modulated confidence scoring, and memory sharing, SYMPHONY improves both search diversity and planning effectiveness. Experiments across multiple benchmarks show consistent gains in accuracy and efficiency. Importantly, SYMPHONY performs well even with models that run on consumer-grade hardware, making it a practical and scalable solution.

Future research will focus on extending SYMPHONY to unstructured or noisy environments, reducing reliance on manually tuned hyperparameters, and integrating fairness and robustness considerations into the planning process. We also plan to explore more efficient memory architectures to support scalable, continual adaptation.

\section*{Acknowledgements}
This work is supported by the Joint Key Project of National Natural Science Foundation of China (U23A20298), Yunnan Fundamental Research Project (202501AT070231), Open Project Program of Yunnan Key Laboratory of Intelligent Systems and Computing (ISC24Y03), and Professional Degree Graduate Practice Innovation Project of Yunnan University (ZC-252514097).

% We introduced SYMPHONY, a synergistic multi-agent planning framework that integrates Monte Carlo Tree Search with a heterogeneous pool of language models. By moving beyond the conventional single-model paradigm, SYMPHONY enhances exploration through diversity-aware rollouts that reduce redundancy and mitigate model-specific biases. The framework further incorporates UCB-based model scheduling, entropy-modulated confidence scoring, and pool-wise memory sharing to enable more effective coordination, uncertainty-aware evaluation, and reflective learning across agents. Experimental results across multiple benchmark datasets demonstrate that SYMPHONY consistently improves both accuracy and planning efficiency. Notably, the framework achieves competitive or superior performance even when using models that can be deployed on consumer-grade GPUs, highlighting its practical value for scalable, accessible autonomous agent design. Future directions include exploring theoretical underpinnings of diversity-driven planning, extending to multimodal agents, and reducing reliance on manually set hyperparameters.

% \section*{References}
\bibliographystyle{plainnat}
\bibliography{reference}

%%%%%%%%%%%%%%%%%%%%%%%%%%%%%%%%%%%%%%%%%%%%%%%%%%%%%%%%%%%%

\newpage
\appendix

\input{appendix}

\input{checklist}

\end{document}

%% file: intro.tex
\section{Introduction}

The advent of large language models (LLMs) has significantly advanced the development of autonomous agents capable of performing complex tasks across various domains, including question answering, code generation, and web navigation. These LLM-based agents leverage the extensive knowledge and reasoning capabilities inherent in LLMs to make decisions and plan actions. A prevalent approach in this context is the integration of Monte Carlo Tree Search (MCTS) \cite{coulom2006MCTS} with LLMs, wherein the LLM guides the exploration of potential action sequences to achieve specific goals \cite{hao2023reasoning,zhou2024language,shi2025monteICLR,gan2025master}. This combination has shown promise in enhancing the decision-making processes of autonomous agents.

Despite the recent progress in integrating LLMs with planning algorithms, existing methods \cite{wei2022chaincot,cot-sc2023,yao2023react,shinn2023reflexion,yao2023tree,hao2023reasoning} predominantly adopt a single-model paradigm in which one LLM is queried multiple times with identical or slightly perturbed prompts to simulate diverse action branches during MCTS. The underlying assumption is that the stochasticity or sampling variance of the model is sufficient to generate rollouts that explore a wide range of potential solutions. However, in practice, this approach suffers from a critical limitation: the outputs tend to exhibit high similarity across calls, often reflecting the same dominant reasoning pattern learned by the model \cite{hao2023reasoning,zhou2024language,gan2025master}. As a result, the generated rollouts lack meaningful diversity, leading to narrow and redundant search trajectories.

This deficiency severely constrains the agent’s exploration capability within the solution space. When the search tree is populated with highly similar branches, the planner becomes susceptible to local optima, and its ability to discover novel or unexpected solutions is greatly diminished. In particularly challenging tasks that require compositional reasoning or multi-step tool use, the agent may fail to identify the correct solution path altogether. Even in cases where the solution is eventually found, the process may involve excessive sampling and token consumption, incurring significant computational overhead. These inefficiencies highlight a fundamental mismatch between the need for diverse exploration in planning and the limited variability achievable by repeatedly sampling from a single, monolithic LLM.

To address the above limitations,we propose \textbf{SY}nergistic \textbf{M}ulti-agent \textbf{P}lanning with \textbf{H}eterogene\textbf{O}us La\textbf{N}guage Model Assembl\textbf{Y} (\textbf{SYMPHONY}). The framework integrates multiple language models into a unified planning system that enhances multi-step reasoning through diversity-aware search, adaptive coordination, and reflective adaptation.

A central innovation of SYMPHONY is its heterogeneous agent pool, composed of LLMs with diverse pretraining sources and reasoning styles. Instead of relying on a single agent, SYMPHONY assigns different agents to generate candidate actions at each search node, thereby introducing structural diversity into the search tree. This diversity increases the likelihood of generating complementary reasoning paths, reduces model-specific biases, and improves performance on complex, multi-hop tasks. Empirical results show that expanding model diversity leads to more unique branches per node and consistent gains in task accuracy.

In addition to model heterogeneity, SYMPHONY incorporates several complementary components that further enhance planning performance. A UCB-based scheduling strategy dynamically allocates agents based on historical effectiveness, improving coordination across agents. An entropy-modulated confidence scoring (EMCS) mechanism calibrates value estimates using agent-level uncertainty, yielding more stable evaluations. Finally, a pool-wise memory sharing mechanism enables agents to learn from past failures through natural language reflections, which are shared across the agent pool and incorporated into future prompts. These components together support efficient, adaptive, and robust search behavior.

We evaluate SYMPHONY across three distinct environments that represent key capabilities of LLM-based agents: multi-hop reasoning (HotpotQA), sequential decision making (WebShop), and code generation (MBPP). Experimental results show that SYMPHONY consistently outperforms strong baselines.  In addition to improved performance, SYMPHONY achieves higher planning efficiency, requiring fewer MCTS node expansions to reach correct solutions. Notably, the framework delivers competitive or superior results even when built upon cost-effective models, demonstrating practical value without relying on high-cost large-scale deployments.

%% file: related_work.tex
\section{Related Work}

% (1) 传统的 baseLM, CoT
% (2) 树形结构的，LATS
% (3) 多模型的
% The remarkable reasoning capabilities of Large Language Models (LLMs) have catalyzed a shift from static text generation to agentic planning, spawning a diverse array of techniques. We categorize this work into three themes—single‐path strategies, multi‐path exploration, and multi‐agent strategies and highlight representative advances in each.

\subsection{LLM-based Planning and Reasoning}
Early work on LLM-based reasoning focused on improving consistency and correctness through guided inference. \citet{brown2020lfewshot} introduced in-context learning with exemplars, 
while the Chain-of-Thought paradigm \cite{wei2022chaincot,kojima2022zeroshot-cot,ning2023skeleton-cot} encouraged models to generate step-by-step rationales during prediction. Later studies proposed more structured prompting techniques, such as meta-prompting \citep{zhang2023metaprompt} and meta-constraint-guided inference \citep{suzgun2024meta}, to scaffold the reasoning process with predefined formats or global constraints.

To enhance reasoning adaptability, several methods incorporate dynamic feedback. \citet{yao2023react} interleaves environment interactions with reasoning steps, \citet{shinn2023reflexion} enables self-correction through natural language reflections, and code-based approaches \citet{qiu2023phenomenal-codebase,chen2023self-debug-codebase} iteratively refine outputs based on execution results. \citet{xu2024rereading} further improves performance by prompting LLMs to rearticulate and revise their own reasoning chains.

As tasks grew more complex, researchers began to move beyond linear inference and explore tree-structured reasoning. \citet{tang2021high} and \citet{zhang2023end} introduced early mechanisms for maintaining and refining multiple hypotheses in dialogue and QA tasks. \citet{wang2022self-con} aggregates diverse reasoning paths through sampling and majority voting, while \citet{pan2024dynathink} dynamically adjusts decoding strategy between heuristic and deliberative modes.

A more principled formulation of structured search appears in Tree of Thoughts \citep{yao2023tree}, which organizes reasoning steps into a decision tree with intermediate backtracking. Building on this, Monte Carlo Tree Search (MCTS) has been applied to guide reasoning more systematically: \citet{hao2023reasoning} treat the LLM as a world model within a reward-driven search process, and \citet{zhou2024language} further integrate reasoning, planning, and reflection within an MCTS-based framework using learned value estimates and external feedback. \citet{shi2025monteICLR} employ memory-augmented single-agent MCTS to enhance decision-making in text-based games.

While prior work focuses on structured reasoning with a single model, our approach introduces model-level heterogeneity to enhance diversity and robustness in planning.

\subsection{Multi-Agent Collaboration with Language Models}

Multi-agent frameworks leverage multiple LLMs or specialized modules to improve reasoning diversity, adaptability, and robustness. Early approaches adopt static task division, assigning agents predefined roles and communication protocols. For instance, ChatDev \citep{qian2023chatdev} simulates software development by dividing planning, coding, and testing among fixed-role agents, while MetaGPT \citep{hong2023metagpt} enforces similar pipelines using hand-crafted coordination logic. AutoAgents \citep{chen2023autoagents} automates agent instantiation but still operates under rigid, rule-based interaction patterns. Although effective in structured environments, these systems struggle with dynamic tasks due to their limited flexibility.

More recent work shifts toward dynamic coordination, enabling emergent collaboration and context-aware adaptation. AgentVerse \citep{chen2023agentverse} adopts a blackboard architecture where agents communicate freely through shared language-based memory. CAMEL \citep{li2023camel} introduces turn-based agent dialogue for zero-shot task-solving, while AutoGen \citep{wu2023autogen} allows agents to negotiate roles and delegate subtasks on the fly. In complex QA settings, WebGPT \citep{nakano2021webgpt} decomposes queries into search, summarization, and synthesis subtasks, and MAd \citep{li2024MAD} employs adversarial debate between LLMs to expose reasoning flaws. MASTER \citep{gan2025master} integrates multi-agent behavior into MCTS by adapting the UCT formula using reward signals. AgentCoder \citep{DBLP:journals/corr/abs-2312-13010} further demonstrates the utility of functional specialization in code generation, coordinating programmer, test designer, and test executor agents within a feedback loop to ensure correctness and completeness.

Unlike existing frameworks that rely on uniform agents and costly coordination, our method enables lightweight, heterogeneous collaboration through principled search and memory sharing.

%% file: method_new.tex
\section{SYMPHONY}
\label{system}
\subsection{Background and Overview}

A Markov Decision Process (MDP) \cite{bellman1966MDP} provides a principled framework for modeling sequential decision-making, defined by the tuple $(S, A, \mathcal{T}, R, \gamma)$, where $S$ is the state space, $A$ is the action space, $\mathcal{T}: S \times A \rightarrow \mathcal{P}(S)$ defines the transition dynamics, $R: S \times A \rightarrow \mathbb{R}$ is the reward function, and $\gamma \in [0,1]$ is the discount factor. At each timestep, the agent observes a state $s \in S$, selects an action $a \in A$, transitions to a new state $s' \sim P(\cdot \mid s, a)$, and receives a reward $R(s, a)$. The objective is to learn a policy that maximizes the expected cumulative discounted return.

LLMs can be naturally integrated into this framework to support high-level reasoning and decision-making. Specifically, an LLM can serve as a \textit{policy} by generating actions conditioned on language-based state representations, as a \textit{value function} by estimating expected returns from textual trajectories, or as a \textit{world model} by predicting future states and rewards through learned knowledge. Unlike traditional reinforcement learning agents that rely on explicit environment modeling and manually designed reward signals, LLM-based agents leverage pretraining on large corpora to internalize commonsense, domain knowledge, and structured reasoning. This allows them to operate effectively in complex, open-ended environments with minimal task-specific engineering.

Monte Carlo Tree Search (MCTS)~\cite{coulom2006MCTS} is a sample-based planning algorithm that incrementally builds a search tree by balancing exploration and exploitation. It has been widely used in sequential decision-making problems and is well-suited for integration with LLM-based agents, as it allows structured reasoning guided by model-generated priors.

Formally, given the MDP setup, MCTS constructs a partial search tree rooted at the initial state $s_0$, iteratively performing four steps: \textit{selection}, which traverses the tree using an upper confidence bound to choose promising actions; \textit{expansion}, which adds new child nodes for unexplored actions; \textit{simulation} (or rollout), which estimates future rewards using a  policy; and \textit{backpropagation}, which updates statistics along the visited path. A detailed description of MCTS can be found in Appendix~\ref{appendix:mcts}.

In this work, we adapt MCTS by incorporating LLMs to guide both the selection and rollout phases, replacing uniform or heuristic strategies with model-informed priors that focus exploration on semantically meaningful regions. Building on this foundation, we introduce \textbf{SYMPHONY}, a synergistic multi-agent planning framework designed to enhance both the efficiency and robustness of LLM-based decision-making. SYMPHONY extends classical MCTS through several key innovations: a heterogeneous ensemble of LLM agents with diverse inductive priors, a UCB-driven adaptive agent scheduling strategy, a pool-wise memory sharing protocol enabling decentralized reflective adaptation, and an entropy-aware utility modulation mechanism for confidence-calibrated evaluation. These components collectively promote diverse trajectory generation, context-aware coordination, coherent information propagation and reliable value estimation. The theoretical analysis and complete pseudocode of SYMPHONY can be found in Appendix \ref{alg:symphony}.

\begin{figure}[h]
		\centering
		\includegraphics[width=\textwidth]{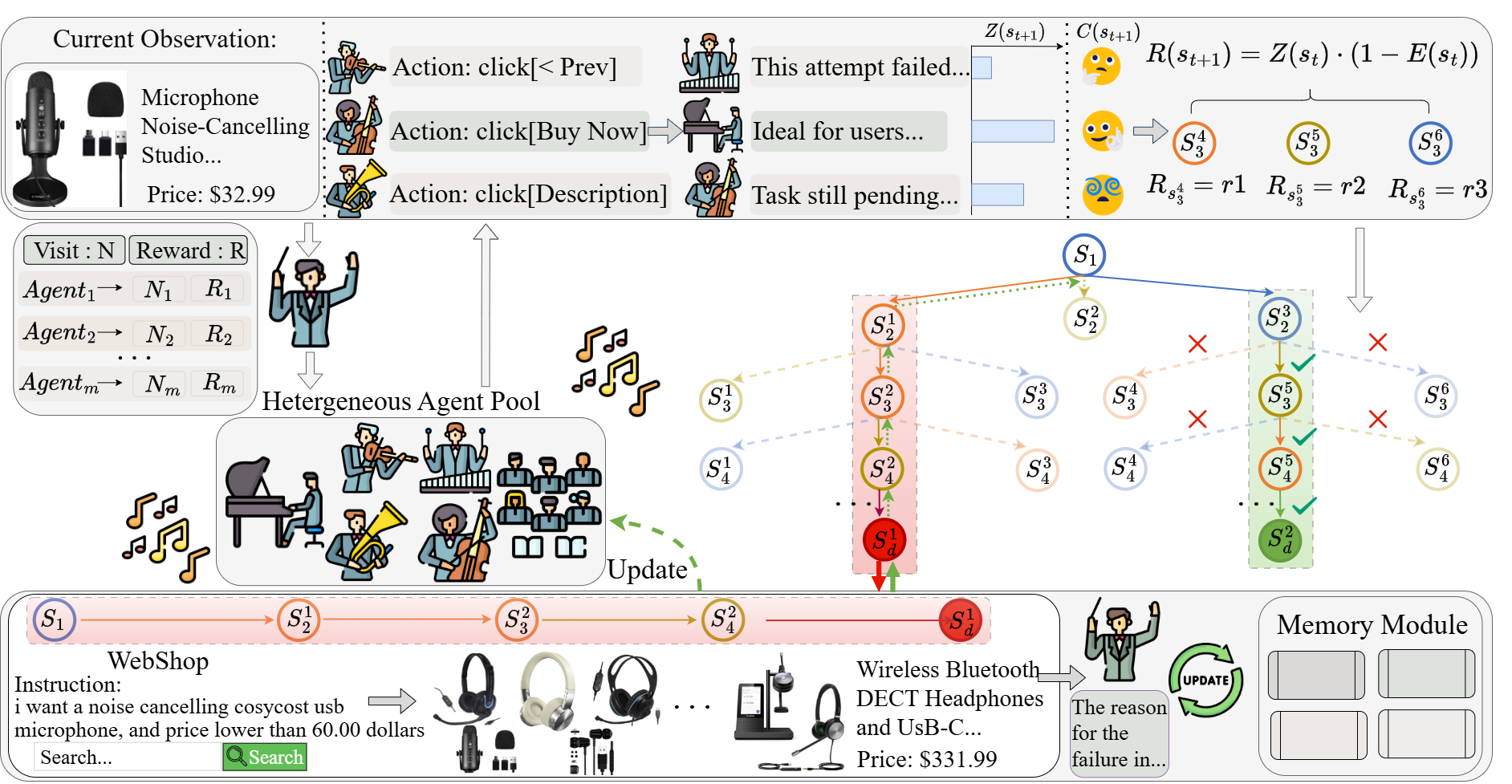}
		\caption{SYMPHONY System Overview.}
\end{figure}

\subsection{Heterogeneous Agent Pool}

The heterogeneous agent pool in SYMPHONY is designed to enhance rollout diversity by incorporating multiple language models with varied inductive biases and reasoning behaviors. Unlike traditional MCTS approaches that rely on repeated queries to a single language model, SYMPHONY maintains a collection of distinct language models, each serving as an independent agent that contribute complementary perspectives during search. Formally, the agent pool is represented as \(\mathcal{M}^{(k)} = \left \{ M_1^{(k)}, \cdot\cdot\cdot, M_n^{(k)} \right \}  \), where $M_i$ is the $i^{th}$ agent based on a language model after the $k^{th}$ memory update.  

These agents may be instantiated from either open-source models that are deployable on consumer-grade hardware, or large-scale cloud-based models accessible only via remote API.  Different agents exhibit complementary strengths in reasoning depth, factual precision, abstraction ability, and stylistic preferences, which collectively enhance the system’s capacity to explore diverse trajectories in the search space.

SYMPHONY employs a uniform  input-output interface for agents pool. More specifically, the input to the agent pool at the $t^{th}$ step is 
$P_{\phi}(s_t, h_{t-1})$, where  $\phi \in \{\text{expansion}, \text{evaluation}, \text{reflection}\}$  is the function indicator of language models , $P_\phi$ is the corresponding prompt template, and $h_t$ is the interaction history \(h_{t-1}=(s_0,a_0,\cdot\cdot\cdot,s_{t-1},a_{t-1})\). This design choice facilitates modularity. New models can be added or removed without altering the core planning algorithm. It ensures compatibility with future advances in LLMs and facilitates efficient reuse of available computational resources under different deployment settings. Prompts for each stage can be found in case studies (Appendix \ref{appendix:case}). 

% While differing in architecture and training background, all agents conform to a unified interface, abstracting away model-specific details and allowing SYMPHONY to invoke them interchangeably. 
% This design choice facilitates modularity. New models can be added or removed without altering the core planning algorithm. It ensures compatibility with future advances in LLMs and facilitates efficient reuse of available computational resources under different deployment settings. 
%  More specifically, the input to the agent pool at the $t^{th}$ step is, 
% \begin{equation}
%     Prompt = P_{\phi}(s_t, h_{t-1})
% \end{equation}
% where  $\phi \in \{\text{expansion}, \text{evaluation}\}$ is the stage indicator of MCTS, $P_\phi$ is the corresponding prompt template, and $h_t$ is the interaction history   \(h_{t-1}=(s_0,a_0,\cdot\cdot\cdot,s_{t-1},a_{t-1})\).

% By decoupling planning logic from model-specific implementations and enforcing a uniform input-output interface, SYMPHONY transforms diverse LLMs into interoperable planning components. This enables ensemble-style exploration without compromising architectural generality or extensibility, and forms the basis for the adaptive coordination and confidence scoring strategies introduced in subsequent sections.

\subsection{Agent Scheduling}
To operationalize the functional heterogeneity of the agent pool, SYMPHONY implements an adaptive dispatch mechanism grounded in the Upper Confidence Bound (UCB) principle, formulating agent selection at each MCTS rollout step as a structured multi-armed bandit problem. Rather than relying on static sampling heuristics or fixed priority weights, the framework dynamically calibrates agent choice based on  performance statistics, enabling context-sensitive allocation of reasoning capacity.

Formally, for each agent $M_i^{(k)} \in \mathcal{M}^{(k)}$, the scheduler maintains a cumulative utility estimate $\bar{Q}(M^{(k)}_i)$ reflecting empirical rollout effectiveness, Let $S_{M_i^{(k)}}$ denotes the set of nodes generated by agent $M^{(k)}_i$, $S_{M_i^{(k)}} =  \{ s_{t+1}  \sim  \mathcal{T}(s_t, M_i^{(k)} (s_t, h_{t-1}) )  \} $. We record the total invocation count for agent $M^{(k)}_i$ as $N_i^{(k)}$, Similarly, the cumulative average score for agent $M^{(k)}_i$ is defined as $\bar{Q}(M^{(k)}_i) = \ { \sum_{s_t \in S_{M_i^{(k)}}}}R(s_{t}) / |S_{M_i^{(k)}}| $. The selection priority at a search node $s_t$ is governed by the canonical UCB expression:

\begin{equation}\label{eq:ucb}
\mathrm{UCB}(M_i^{(k)}) = \bar{Q}(M_i^{(k)}) + \alpha \cdot \sqrt{\frac{\ln N^{\mathcal{M}^{(k)}}_{total}}{N(M^{(k)}_i) + 1}}
\end{equation}

Here $\alpha$ denotes an exploration–exploitation trade-off hyperparameter, 
$N^{\mathcal{M}^{(k)}}_{total}= \ {\textstyle \sum_{j=1}^n}N(M^{(k)}_j)$ represents the total number of scheduling decisions made thus far, and the denominator smoothing term ensures initialization-phase optimism. This formulation favors agents that exhibit either superior historical returns or low invocation frequency, thereby enabling simultaneous exploitation of high-confidence models and exploration of underutilized reasoning modes.

Crucially, this scheduling mechanism is not an isolated module but is tightly interwoven with the recursive structure of MCTS, encompassing action generation and reflective evaluation. Following UCT-guided node traversal, a  frontier node $s_t$ is expanded by dispatching an $M_i^{(k)} \in \mathcal{M}^{(k)}$
selected via Equation~\ref{eq:ucb}, which is queried using the  expansion prompt:

\begin{equation}
a_t = M_i^{(k)}(P_{\text{expansion}}(s_t, h_{t-1}))
\end{equation}

where $h_{t-1}$ encodes the accumulated interaction trace. The returned actions populate the search frontier with semantically diverse and structurally varied hypotheses.

The agent scheduling mechanism is also used in creating pool-wise reflection memory and node evaluation with EMCS, which will be detailed in the subsequent subsections. 

We further establish, from a theoretical perspective, that sampling agents from the ensemble with non-zero probabilities leads to a strictly lower expected error than deterministically selecting a single agent. The detailed proof is provided in Appendix~\ref{appendix:proofs}.

% These candidate actions subsequently trigger a reflective evaluation cascade, wherein agents perform parallelized, self-consistent analyses of the current search state—assessing logical coherence, goal alignment, and strategic deviation. Their assessments are subsequently fused via the Entropy-Modulated Confidence Scoring (EMCS) mechanism (Section 4.4), producing a calibrated utility landscape that prioritizes consensus-grounded hypotheses while suppressing incoherent or speculative rollouts.

% Furthermore, in the event of terminal failure—where the search fails to reach a successful outcome, the system reengages the scheduling loop for post hoc reflection. The entire trajectory is abstracted into a natural language diagnostic, which is broadcast into the shared memory of the agent pool to inform future behavior via decentralized prompt adaptation and internal alignment tuning.

% Through this tightly integrated scheduling framework, SYMPHONY transcends naive ensemble querying and instead realizes an orchestrated reasoning substrate: one that continuously aligns agent dispatch with contextual task demands and emergent search signals. This not only maximizes epistemic coverage during exploration but also endows the system with the capacity for iterative self-correction, enabling robust operation in high-stakes, dynamically evolving decision environments.

\subsection{Pool-wise Memory Sharing}
To support continual adaptation without parameter updates, SYMPHONY introduces a pool-wise memory sharing mechanism based on decentralized reflection with natural language. Rather than relying on explicit retraining, agents update their behavior by integrating peer-generated reflections into prompt-level memory.

When a trajectory terminates unsuccessfully, $\tau_{\mathrm{fail}} = (s_0, a_0, \dots, s_T)$, a UCB-selected agent $M_i^{(k)}$
generates a structured reflection $\mathcal{R}^{k}_i$ summarizing the failure. This reflection is broadcast to the entire agent pool and treated as a shared memory block. As reflections accumulate from different agents and episodes, they form a diverse collective memory that enhances generalization and coordination.

To manage memory constraints and maintain efficiency, each agent retains a fixed-size buffer updated via a FIFO policy. Reflections are incorporated through prompt-level memory updates:
\begin{equation}
\mathcal{M}^{(k+1)} = \text{Update}(\mathcal{M}^{(k)}, \mathcal{R}^k), \mathcal{R}^k = M_i^{(k)}(P_{\text{reflection}}(s_t, h_{t-1})) 
\end{equation}
This update mechanism enables behavioral adjustment without modifying model parameters, supporting lightweight and scalable adaptation across heterogeneous agents.

\subsection{Entropy-Modulated Node Evaluation}
To improve value estimation during search, SYMPHONY introduces an entropy-modulated node evaluation strategy that adjusts utility scores based on agent confidence. Upon expanding a new node $s_t$, a scheduled agent $M_i^{(k)} \in \mathcal{M}^{(k)}$ performs an internal evaluation, producing a value estimate $Z(s_t) \in [0,1]$ and a confidence score $C(s_t) \in (0,1)$:
\begin{equation}
    Z(s_{t}), C(s_{t}) = M_i(P_{\text{evaluation}}(s_{t}, h_{t-1}))
\end{equation}

To integrate these outputs, SYMPHONY employs Entropy-Modulated Confidence Scoring (EMCS), which penalizes uncertain predictions by down-weighting value estimates using the entropy of a Bernoulli distribution. Here, the confidence score $C(s_t)$ is interpreted as the success probability of a Bernoulli variable: the entropy is maximal at $C(s_t)=0.5$, indicating maximum uncertainty, and approaches zero as $C(s_t)\rightarrow 0$ or $C(s_t)\rightarrow 1$, reflecting high confidence.
\begin{equation}\label{eq:emcs}
     R(s_{t}) = Z(s_t) \cdot (1 - E(s_t))
\end{equation}
where $E(s_t) = -C(s_t)\ln C(s_t) - (1 - C(s_t))\ln(1 - C(s_t))$.

This formulation preserves confident evaluations while suppressing uncertain ones, ensuring that nodes with ambiguous outcomes have reduced influence. Compared to fixed heuristics, EMCS offers uncertainty-aware, real-time modulation with minimal overhead, leading to more stable and reliable planning behavior within the MCTS loop.

%% file: exp.tex
\section{Experiments}\label{sec:exp}

%需要添加每个数据集的baseline为何不同的解释。并且某些baseline的模型不是gpt-4
%选用模型的理由

% We design experiments along three dimensions, reasoning, decision-making, and code generation to validate the effectiveness of our approach. Specifically, we evaluate our method across three tasks: (1) Multi-hop question answering on HotpotQA \cite{yang2018hotpotqa} to assess reasoning ability;(2) Decision-making in the WebShop \cite{yao2022webshop} e-commerce platform to test planning and selection; and(3) Code generation on MBPP \cite{austin2021mbpp} to evaluate whether the model can infer a solution through reasoning and present it in a standardized, executable form.

We evaluate our approach across three representative tasks spanning reasoning, decision-making, and code generation. Specifically, we conduct experiments on: (1) multi-hop question answering using HotpotQA~\cite{yang2018hotpotqa} to assess reasoning capabilities; (2) goal-directed interaction on WebShop~\cite{yao2022webshop} to evaluate decision-making and planning; and (3) code generation on MBPP~\cite{austin2021mbpp} to test the model's ability to reason and produce executable solutions.

\subsection{Experiment Settings}

SYMPHONY supports flexible agent composition and is compatible with a range of language models under different computational constraints. We evaluate two deployment configurations: \textbf{SYMPHONY-S}, designed for consumer-grade hardware, and \textbf{SYMPHONY-L}, which leverages large-scale foundation models via cloud-based APIs.

\textbf{SYMPHONY-S} comprises open-source models that can be executed locally, including  Qwen2.5‑7B‑Instruct‑1M~\cite{yang2024qwen2-5-1M}, Mistral‑7B‑Instruct‑v0.3~\cite{jiang2023mistral7b}, and Llama‑3.1‑8B‑Instruct~\cite{grattafiori2024llama3herdmodels}. This configuration supports efficient inference with minimal deployment cost. In contrast, \textbf{SYMPHONY-L} comprises high-performance models: GPT‑4~\cite{achiam2023gpt4}, Qwen‑Max (2024‑09‑19)~\cite{yang2024qwen2-5}, and DeepSeek‑V3 (2025‑03‑24)~\cite{liu2024deepseek}, which operate through API endpoints within inference-as-a-service infrastructures.

All experiments are carried out under a unified protocol aligned with previous work~\cite{shinn2023reflexion, zhou2024language, gan2025master}. To ensure comparability, we apply consistent prompt formats and fixed hyperparameter settings across both configurations, including decoding temperature, planning depth, rollout budget, and number of demonstrations. To mitigate LLM stochasticity, each experiment is repeated 3 times on the same data set, and the mean accuracy is reported. The detailed hyper-parameter settings are described in Appendix~\ref{appendix:parameter_setting}. % In all the experiments, SYMPHONY-L significantly outperforms SOTA ($p<0.01$). 

\begin{table*}[ht]
\centering
\begin{minipage}[t]{0.3\linewidth}
\centering
\caption{HotpotQA. }
\label{tab:hotpotqa}
\setlength{\tabcolsep}{3pt} % 缩小列间距
\resizebox{0.98\linewidth}{!}{ % 适当放大显示比例
\footnotesize
\begin{tabular}{@{}lc@{}}
\toprule
\textbf{Method} & \textbf{Exact Match $\uparrow$} \\
\midrule
\makecell{CoT  \cite{wei2022chaincot}}            & 0.34 \\
\makecell{CoT-SC  \cite{cot-sc2023}}          & 0.38 \\
\makecell{ReAct  \cite{yao2023react}}          & 0.39 \\
\makecell{Reflexion  \cite{shinn2023reflexion}}  & 0.51 \\
\makecell{ToT  \cite{yao2023tree}}            & 0.55 \\
\makecell{RAP  \cite{hao2023reasoning}}         & 0.60 \\
\makecell{LATS  \cite{zhou2024language}}        & 0.71 \\
\makecell{Beam Retrieval  \cite{zhang2023end}}  & 0.73 \\
\makecell{MASTER  \cite{gan2025master}}         & 0.76 \\
\midrule
\textbf{SYMPHONY-S} & \textbf{0.59} \\
\textbf{SYMPHONY-L} & \textbf{0.79} \\
\bottomrule
\end{tabular}
}
\end{minipage}
\hfill
\begin{minipage}[t]{0.34\linewidth}
\centering
\caption{WebShop.}
\label{tab:webshop_results}
\setlength{\tabcolsep}{3pt} % 缩小列间距
\resizebox{0.98\linewidth}{!}{ % 适当放大显示比例
\footnotesize
\begin{tabular}{lcc}
\toprule
\textbf{Method} & \textbf{Score $\uparrow$} & \textbf{SR $\uparrow$} \\
\midrule
\makecell{IL  \cite{yao2022webshop}}          & 0.60 & 0.29 \\
\makecell{IL+RL  \cite{yao2022webshop}}       & 0.62 & 0.29 \\
\makecell{ReAct  \cite{yao2023react}}      & 0.54 & 0.32 \\
\makecell{Reflexion  \cite{shinn2023reflexion}} & 0.64 & 0.35 \\
\makecell{Fine-tuning  \cite{furuta2024multimodal}} & 0.68 & 0.45 \\
\makecell{AgentKit  \cite{wu2024agentkit}}     & 0.70 & --  \\
\makecell{LATS  \cite{zhou2024language}}      & 0.76 & 0.38 \\
\makecell{MASTER  \cite{gan2025master}}          & 0.80 & -- \\
\makecell{Human Expert  \cite{yao2022webshop}}    & 0.82 & 0.60 \\
\midrule
\textbf{SYMPHONY-S}           & \textbf{0.82} & \textbf{0.56} \\
\textbf{SYMPHONY-L}           & \textbf{0.88} & \textbf{0.72} \\
\bottomrule
\end{tabular}
}

\end{minipage}
\hfill
\begin{minipage}[t]{0.34\linewidth}
\centering
\caption{MBPP. }
\label{tab:mbpp_results}
\setlength{\tabcolsep}{3pt} % 缩小列间距
\resizebox{0.98\linewidth}{!}{ % 适当放大显示比例
\footnotesize
\begin{tabular}{lcc}
\toprule
\textbf{Method} & \textbf{\makecell{Pass@1 \\ (Python) $\uparrow$}} & \textbf{\makecell{Pass@1 \\ (Rust) $\uparrow$}}  \\
\midrule
\makecell{GPT-4  \cite{shinn2023reflexion}} & 0.800 & 0.710 \\
\makecell{GPT-4(CoT)  \cite{gan2025master}}  & 0.683 & -- \\
\makecell{GPT-4(ReAct)  \cite{yao2023react}}   & 0.710 & -- \\
\makecell{Reflexion  \cite{shinn2023reflexion}}  & 0.771 & 0.754 \\
\makecell{RAP  \cite{hao2023reasoning}}     & 0.714 & -- \\
\makecell{LATS  \cite{zhou2024language}}        & 0.811 & -- \\
\makecell{MetaGPT  \cite{hong2023metagpt}}     & 0.877 & -- \\
\makecell{AgentVerse  \cite{chen2023agentverse}} & 0.890 & -- \\
\makecell{MASTER  \cite{gan2025master}}          & 0.910 & -- \\
\makecell{AgentCoder  \cite{huang2023agentcoder}} & 0.918 & -- \\
\midrule
\textbf{SYMPHONY-S}           & \textbf{0.927} & \textbf{0.946} \\
\textbf{SYMPHONY-L}           & \textbf{0.965} & \textbf{0.974} \\
\bottomrule
\end{tabular}
}
\end{minipage}
\parbox{\linewidth}{
\footnotesize 
Note: Metrics are normalized to the [0,1] range; A dash (–) marks those not reported in the publication.
% All evaluation metrics are normalized to the [0,1] range. A dash (–) indicates that the corresponding method did not report that metric. SR for AgentKit and MASTER on WebShop are unavailable due to missing implementation details and metric reporting in their original publications.
% Additionally, many baselines on MBPP report results only on Python, without verifying language-agnostic effectiveness.
}
\end{table*}

\subsection{Reasoning:HotpotQA}
\textbf{Setup.} HotpotQA~\cite{yang2018hotpotqa} is a large-scale benchmark for multi-hop question answering, constructed from Wikipedia and containing approximately 113,000 question–answer pairs. In line with prior work~\cite{yao2023react,shinn2023reflexion,zhou2024language,gan2025master}, we employ an oracle feedback setting, where the environment immediately indicates whether a selected answer is correct. This setup is designed to isolate and evaluate the agent’s decision-making capabilities during interaction, rather than its ability to generate final answers.
Evaluation on this dataset is based on the exact match (EM) metric.

We compare SYMPHONY against representative baselines from four categories: (1) \textit{Linear reasoning} methods such as CoT~\cite{wei2022chaincot} and CoT-SC~\cite{cot-sc2023}; (2) \textit{Feedback-driven} approaches including ReAct~\cite{yao2023react} and Reflexion~\cite{shinn2023reflexion}; (3) \textit{Structured reasoning} methods such as ToT~\cite{yao2023tree}, RAP~\cite{hao2023reasoning}, LATS~\cite{zhou2024language}, and Beam Retrieval~\cite{zhang2023end}; and (4) the \textit{multi-agent framework} MASTER~\cite{gan2025master}, which builds multi-agent from the same LLM. Baseline results are taken from~\citet{gan2025master}, where GPT-4 is used uniformly across all methods.

\textbf{Results.} SYMPHONY demonstrates strong performance across all baseline categories. The lightweight \textbf{SYMPHONY-S} outperforms both linear reasoning and feedback-driven baselines, and performs comparably to structured search methods like RAP. The stronger \textbf{SYMPHONY-L} surpasses all structured baselines, including MASTER, achieving state-of-the-art performance on HotpotQA. These improvements reflect SYMPHONY’s ability to combine model heterogeneity with coordinated compositional reasoning.

\subsection{Sequential Decision Making:WebShop}

\textbf{Setup.} WebShop~\cite{yao2022webshop} is a simulated e-commerce platform featuring over 1.18 million products and 12,000 natural language queries. Agents must navigate the website using browser-like operations (e.g., search, click, select) to identify items that satisfy user constraints. Performance is measured by average score, which reflects partial attribute satisfaction, and success rate (SR), which reflects full constraint satisfaction.

We compare SYMPHONY against a comprehensive set of baselines reflecting five categories: (1) \textit{Task-native methods} including imitation learning (IL), IL+RL, and Human Expert~\cite{yao2022webshop}; (2) \textit{Supervised models} such as a fine-tuned LLM~\cite{furuta2024multimodal}; (3) \textit{Feedback-driven reasoning}, including  ReAct, Reflexion; (4) \textit{Structured search} methods including LATS and AgentKit; and (5) the \textit{multi-agent framework}: MASTER~\cite{gan2025master}. All baselines were reproduced under consistent settings by~\citet{gan2025master} using GPT-4, ensuring fair comparison.

\textbf{Results.} SYMPHONY outperforms all baseline categories. Compared to task-native and supervised approaches, it achieves higher task completion while requiring no domain-specific training. Against feedback-driven and structured search methods, it exhibits stronger planning efficiency and generalization. Finally, SYMPHONY-L surpasses the multi-agent MASTER, establishing a new performance benchmark. These results underscore SYMPHONY’s adaptability across different task-specific execution environments. %and its ability to unify general reasoning with environment-specific execution under a lightweight, modular design.

\begin{figure}[ht]
		\centering
\includegraphics[width=0.95\textwidth]{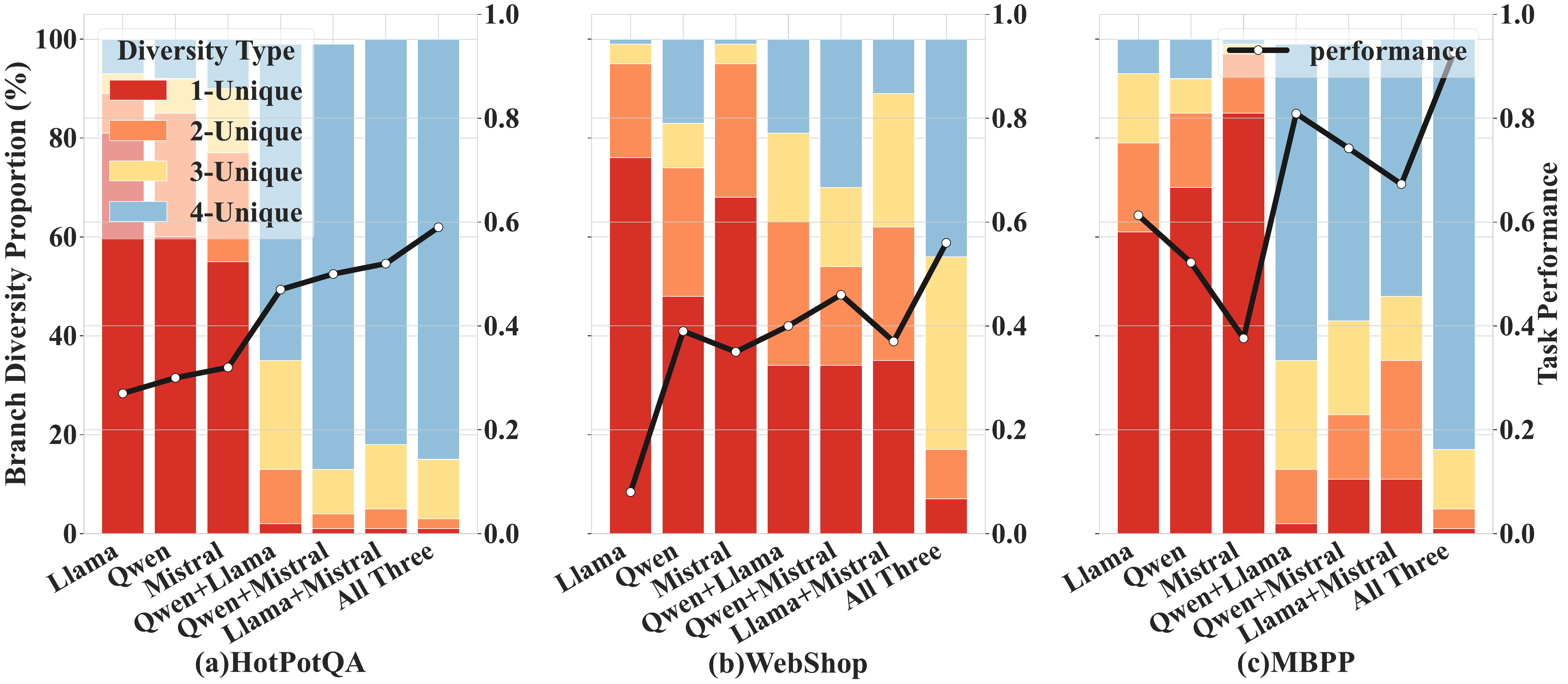}
		\caption{Branch Diversity vs. Task Performance. Bars and left y-axis shows the branch diversity, while lineplot and right y-axis shows the task performance.
        }
        \label{fig:diversity}
\end{figure}

\subsection{Programming:MBPP}

\textbf{Setup.} The Mostly Basic Programming Problems (MBPP) ~\cite{austin2021mbpp} involves multi-step code generation tasks that require condition decomposition, procedural planning, and implementation. Each task provides a description in natural language and a test suite. Success is defined by passing all tests. We follow~\cite{shinn2023reflexion} to evaluate both Python and Rust versions of the datasets using the MultiPL-E compiler suite~\cite{cassano2022multipl}.

Baselines span three major categories: (1) \textit{Single-agent methods}, including GPT-4 and Reflexion~\cite{shinn2023reflexion}, representing the performance ceiling of basic prompting and reactive reasoning; (2) \textit{Multi-agent frameworks}, such as MetaGPT~\cite{hong2023metagpt}, AgentVerse~\cite{chen2023agentverse}, and AgentCoder~\cite{huang2023agentcoder}, which explore different collaboration strategies; and (3) \textit{Search-based approaches}, including RAP~\cite{hao2023reasoning}, LATS~\cite{zhou2024language}, and MASTER~\cite{gan2025master}, which emphasize structured optimization. All baseline results are drawn from or reproduced by~\citet{gan2025master} under consistent backbone and data settings.

\textbf{Results.} SYMPHONY achieves strong performance across all baseline categories. Compared to single-agent methods, it demonstrates superior reasoning depth and planning efficiency. Against multi-agent frameworks, SYMPHONY provides more effective solution search via the introduction of heterogeneous agent pool.  Compared to search-based approaches, it attains state-of-the-art performance in cross-language settings, including Rust, a programming language usually ignored by previous works. These results confirm SYMPHONY’s robustness, generality, and computational efficiency in code generation.

\subsection{Diversity Analysis} 
% Diversity in candidate expansions plays a critical role in MCTS, as it directly influences the range and quality of reasoning paths explored. In SYMPHONY, this diversity is modulated by the composition of the agent pool. Different combinations of agents induce different reasoning behaviors, shaping the structure of the search tree. To evaluate this effect, we analyze how varying agent combinations affect both overall task performance and node-level diversity on all three tasks using SYMPHONY-S.

Branch diversity plays a crucial role in effective search. To assess its impact, we evaluate how different agent pool configurations affect both task performance and branch diversity across all three tasks using SYMPHONY-S. The expansion width is fixed at 4, and each node's candidate branches are categorized by output uniqueness: (a) \textbf{4-Unique}: all branches distinct, (b) \textbf{3-Unique}, (c) \textbf{2-Unique}, and (d) \textbf{1-Unique}: all branches identical. Higher frequencies of 3-Unique and 4-Unique indicate more diversified and informative exploration.

As shown in Figure~\ref{fig:diversity}, increasing agent heterogeneity, from single-agent to pairwise and full-trio configurations (e.g., Qwen+Mistral+Llama), leads to a substantial rise in 4-Unique expansions. On MBPP, for example, this proportion exceeds 80\% under the full ensemble, compared to under 20\% in the single-agent setting. This increase in structural diversity strongly correlates with improved accuracy, with SYMPHONY outperforming single-agent baselines by over 30\% on MBPP and showing similar trends on HotpotQA and WebShop. These findings highlight the critical role of model-level diversity in enhancing search coverage and reasoning robustness.

We also experimented with alternative diversity-promoting strategies such as adversarial prompting and temperature scaling, but found their effect to be marginal. Detailed comparisons are included in Appendix~\ref{appendix:diversity}.

\subsection{Efficiency  and Cost Analysis}

\begin{figure}[ht]
\centering
\begin{minipage}{0.39\textwidth}
    \centering
    \captionof{table}{Comparison of the search tree size on HotpotQA.}
    \label{tab:efficiency_hotpotqa}
    \resizebox{\linewidth}{!}{%
    \begin{tabular}{l|ccc}
        \toprule
        \textbf{Method} & \textbf{K} & \textbf{HotpotQA $\uparrow$ } & \textbf{\#Nodes $\downarrow$ }    \\
        \midrule
        ToT      & 10 & 0.34 & 33.97 \\
        RAP & 10 & 0.44 & 31.53 \\ 
        LATS & 10 & 0.44 & 28.42 \\
        \midrule
        ToT           & 50 & 0.49 & 84.05 \\
        RAP           & 50 & 0.54 & 70.60 \\
        LATS          & 50 & 0.61 & 66.65 \\
        
        \midrule
        \textbf{SYMPHONY-S}   & \textbf{10} & \textbf{0.59}  & \textbf{16.39} \\
        \textbf{SYMPHONY-L}   & \textbf{10} & \textbf{0.79}  & \textbf{9.47}\\
        \bottomrule
        \end{tabular}
    }
    
\end{minipage}
\hfill
\begin{minipage}{0.59\textwidth}
    \centering
    \includegraphics[width=\textwidth]{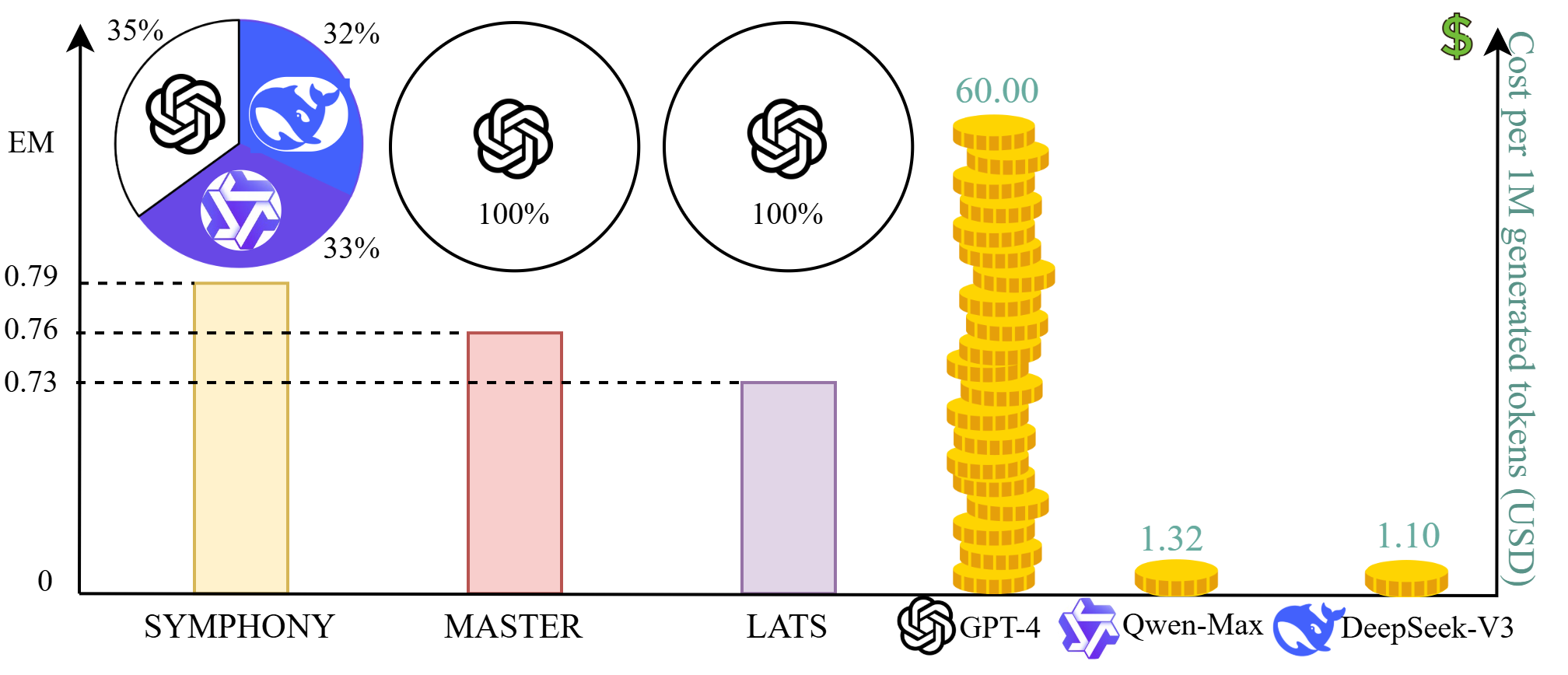}
    \captionof{figure}{Comparison of model invocation frequency and final performance on HotpotQA. %Performance of MASTER and LATS using GPT-4 is reported by \cite{gan2025master} 
    }\label{fig:api_usage}
    \label{fig:cost_accuracy}
\end{minipage}
\end{figure}

To evaluate SYMPHONY’s practicality, we analyze two key aspects: the size of the search tree and the cost of model inference—both crucial to real-world deployment.

Compared to methods like LATS, which use a large trajectory budget ($K=50$) and wider expansion ($n=5$) on HotpotQA and WebShop, SYMPHONY achieves comparable or better results with much smaller values ($K=10$, $n=4$), indicating a more compact search process.

We further assess efficiency by measuring average node expansions in MCTS on HotpotQA. As shown in Table~\ref{tab:efficiency_hotpotqa}, SYMPHONY consistently requires fewer expansions and even outperforms LATS with a fraction of its search budget, reflecting strong sample efficiency.

In terms of cost, SYMPHONY-L reduces reliance on expensive models by using a heterogeneous agent pool. As shown in Figure~\ref{fig:api_usage}, GPT-4 is used in only 40\% of calls, yet SYMPHONY-L still outperforms GPT-4-only baselines. Token-level cost details are provided in Appendix~\ref{appendix:cost}.

Together, these results show that SYMPHONY achieves efficient and cost-effective planning through smaller search trees and more economical model usage.

\subsection{Ablation Study and Hyperparameter Tuning}
To evaluate the impact of SYMPHONY’s core components, we perform a series of ablation studies by selectively disabling key modules, including UCB-based agent scheduling, pool-wise memory sharing, and EMCS scoring. As presented in Table~\ref{tab:abalation_study_moudle}, removing any of these components leads to consistent performance degradation across tasks. These results underscore the effectiveness of dynamic agent scheduling, collaborative memory sharing, and uncertainty-aware scoring in enhancing overall system performance.

We further conduct hyperparameter tuning for the UCB exploration coefficient $\alpha$ used in agent scheduling, as well as the MCTS parameters $n$ and $K$, which jointly determine the search strategy and computational efficiency. Detailed analyses and results are provided in Appendix~\ref{appendix:alpha} and Appendix~\ref{appendix:parameter}. An extended analysis of architectural robustness under varying agent compositions and noise perturbations is also included in Appendix~\ref{appendix:robustness}, offering deeper insights into SYMPHONY’s stability and adaptability. Case studies are included in Appendix~\ref{appendix:case}.

\begin{table}
\centering
\caption{Ablation Study. }
\label{tab:abalation_study_moudle}
\resizebox{0.6\linewidth}{!}{
\begin{tabular}{@{}lccc@{}}
    \toprule
Method &HotpotQA(EM)$\uparrow$
&WebShop(SR)$\uparrow$
&MBPP(pass@1)$\uparrow$\\
    \midrule
SYMPHONY-S    
& \textbf{0.59} & \textbf{0.56} & \textbf{0.927} \\
w/o Agent Scheduling  & 0.51 & 0.48 & 0.906\\
w/o Memory Sharing  & 0.45 & 0.46 & 0.871\\
w/o EMCS      & 0.51 & 0.49 & 0.892\\

    % \midrule
    % \textbf{Ours:LLMs-S} & \textbf{0.59} \\
    % \textbf{Ours:LLMs-L} & \textbf{0.79} \\
    \bottomrule
\end{tabular}
}
% \vspace{0.5em}
\end{table}

%% file: appendix.tex
% \section{Technical Appendices and Supplementary Material}

\section{Algorithmic Details of SYMPHONY}\label{alg:symphony}
The Pseudocode for SYMPHONY can be found at Algorithm \ref{alg:main}.

\begin{algorithm}
\caption{SYMPHONY($s_0,\mathcal{M}, n, D, K, \alpha $)}
\label{alg:main}
\begin{algorithmic}[1]
\Require Initial agent pool \(\mathcal{M} = \{M_1, \dots, M_m\}\), number of generated actions $n$, depth limit $D$, number of roll - outs $K$, exploration weight $\alpha$
\State Initialize action space $A$,  interaction history $H$
\State Initialize cumulative utility estimate \(\bar{Q}(M_i) = 0\), selection count \(N_i = 0\) for each agent \(M_i \in \mathcal{M}\), total scheduling decisions \(N_{total} = 0\)
\State Initialize the search tree with root node \(s_0\) 
\For{$k \leftarrow 0, \ldots, K - 1$}
    \For{$d \leftarrow 0, \ldots, D - 1$}
        \If{$s_t$ is not terminal}
            \For{$i \leftarrow 1, \ldots, n$}
                \State Select a modulated score node \(s_t\) for expansion  \Comment{Selection via Eq.~\ref{eq:uct}}

                \State $M^{k}_a \leftarrow  \arg \max_{M_m \in \mathcal{M}} UCB(M_m)$ \Comment{Agent Scheduling via Eq.~\ref{eq:ucb}}

                \State Update \(M^{k}_a\): \(N^a \leftarrow N^a + 1\), \(N_{total}\leftarrow N_{total} + 1\)

                \State Sample $a_t^{(i)} \sim M^{(k)}_a(P_{\text{expansion}}(s_t^{(i)}, h_{t - 1}^{(i)}))$ \Comment{Expansion}

                \State Get $o_t^{(i)}$ from environment, $s_{t+1}^{(i)} \leftarrow o_t^{(i)}$

                \State $M^{k}_e \leftarrow  \arg \max_{M_m \in \mathcal{M}} UCB(M_m)$ 

                \State \(Z(s_{t}^{(i)}), C(s_{t}^{(i)}) \sim  M^{(k)}_e(P_{\text{evaluation}}(s_{t+1}^{(i)}, h_{t}^{(i)}))\)
                
                \State  \(E(C(s_{t+1}^{(i)})) = -C(s_{t+1}^{(i)})\ln C(s_{t+1}^{(i)}) - (1 - C(s_{t +1}^{(i)}))\ln(1 - C(s_{t+1}^{(i)}))\)
                
                \State  \(R(s_{t+1}^{(i)}) = Z(s_{t+1}^{(i)}) \cdot (1 - E(C(s_{t+1}^{(i)})))\)
                \Comment{Entropy-Modulated Evaluation}

                \State $S_{M^{(k)}_i} =  \{ s_{t+1}  \sim  \mathcal{T}(s_t, M^{(k)}_i (s_t, h_{t-1}) )  \} $

                \State Update \(M^{k}_e\): \(N^e \leftarrow N^e + 1\), \(N \leftarrow N + 1\), $Q(M^{k}_e) \leftarrow \ { \sum_{s_t \in S_{M^{k}_e}}}R(s_{t}) $

                \State \(h_t^{(i)} \leftarrow (h_{t-1}^{(i)}, s_t^{(i)}, a_t^{(i)}) \in H\)
                
                \State Add $s_t^{(i)}$ to children
            \EndFor

            \State $R \gets \Call{Simulate}{s_t,k, D-\text{d}}$ \Comment{Simulation} 
            \If{$R$ indicates success}
            \State return
            \EndIf

        \EndIf
        % 修改这块的反思，按照method
        \If{$s_t$ is terminal or $d == D-1$}
            \State Get $o$ from environment
            \If{$o$ not success}
                \State $\tau_{fail} \leftarrow(s_0,a_0,\cdot\cdot\cdot,s_T)$
               
                \State \(\mathcal{R}^k \leftarrow M^{(k)}_m(P_{\text{reflection}}(\tau_{fail}))\) \Comment{Scheduled agent generates reflection}

                % \State Broadcast reflection \(\mathcal{R}\) to the agent pool \(\mathcal{M}\) 
                
                \For{each agent \(M^{(k)}_j \in \mathcal{M}\)}
                    \State Update agent memory \(\mathcal{M}^{(k+1)}_j = \text{Update}(\mathcal{M}^{(k)}_j, \mathcal{R})\)\Comment{Memory Sharing}
                \EndFor
                \State \textbf{Backpropagate} reward $R$ up the visited path  
    \Comment{Update $Q$ and $N$ per Eq.~\ref{eq:back}}      
            \EndIf
        \EndIf
    \EndFor
\EndFor
\Procedure{Simulate}{$s,k, d$}
  \State $R\leftarrow 0$
  \For{$i = 1$ \textbf{to} $d$}
    \If{$s$ is terminal} \textbf{break} \EndIf
    \State Select agent \(M^{(k)}_{sim}\) and Sample action \(a\): observe $s\leftarrow\text{Env.step}(a)$

    \State $r\leftarrow R(s)$, \quad $R\leftarrow R + r$
  \EndFor
  \State \Return $R$
\EndProcedure
\end{algorithmic}
\end{algorithm}

\section{Theoretical Analysis} \label{appendix:proofs}

\begin{theorem}[Strict Improvement of Agent Pool Sampling]
Consider an agent pool that samples multiple agents with non-zero probabilities.
If it satisfies (i) \textbf{correct coverage}, meaning that at each step at least one agent outputs the correct action, and 
(ii) \textbf{non-triviality}, meaning that no single agent is correct on all steps,
then the ensemble achieves a strictly lower expected error than any single deterministic agent.
\end{theorem}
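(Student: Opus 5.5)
The plan is to formalize the setting minimally and then argue by averaging. Take a finite (or countable) set of steps $t \in \mathcal{S}$ with a probability distribution $\mu$ over steps, an agent pool $\{M_1,\dots,M_n\}$, and indicator errors $e_i(t) = \mathbf{1}[M_i \text{ errs at step } t]$. A single deterministic agent $M_i$ has expected error $\mathrm{Err}(M_i) = \mathbb{E}_{t\sim\mu}[e_i(t)]$. Since the statement leaves open how the ensemble turns sampled proposals into a final decision, I will fix the reading that matches SYMPHONY's expansion step. At each step the pool draws a multiset of agents, each index $i$ having probability $p_i > 0$, with $n$ draws where $n$ is the expansion width. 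The step counts as correct if some drawn agent proposes the correct action, so the correct branch enters the tree and can be picked out by evaluation or oracle feedback. The ensemble error is therefore $\mathrm{Err}(\text{pool}) = \mathbb{E}_{t}\big[\Pr(\text{no drawn agent is correct at } t)\big]$.

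First I compute the per-step miss probability. If draws are i.i.d. with weights $p$, then $\Pr(\text{miss at } t) = \big(\sum_{i} p_i e_i(t)\big)^{n} = (1-c(t))^n$, where $c(t) = \sum_{i\,:\,e_i(t)=0} p_i$. Correct coverage (i) says that for every $t$ some $i$ has $e_i(t)=0$. Because all $p_i>0$, this gives $c(t) \ge p_{\min} > 0$, and hence $\Pr(\text{miss at } t) \le (1-p_{\min})^n < 1$.

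Next I compare with an arbitrary fixed agent $M_j$. Non-triviality (ii) says there is a step $t_j$ with $e_j(t_j)=1$, and I assume this step has positive mass $\mu(t_j)>0$. I then write
\[
\mathrm{Err}(M_j) - \mathrm{Err}(\text{pool}) = \mathbb{E}_t\big[e_j(t) - (1-c(t))^n\big].
\]
Split into two cases by the value of $e_j(t)$. On steps with $e_j(t)=0$, the pool does strictly better than $M_j$ wherever $c(t)<1$, so I have to show the integrand is not negative there. This is the delicate point. The simple pointwise inequality $(1-c(t))^n \le e_j(t)$ fails on such steps, because there $M_j$ has zero error while the pool misses with positive probability $(1-c(t))^n$.

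I expect this step to be the main obstacle: a per-step comparison cannot work, because a randomized pool can be worse than $M_j$ on the steps $M_j$ handles. I would handle it in one of two ways, and state the chosen one explicitly in the proof.

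The first option is to read the theorem in the \emph{multi-draw coverage} regime. Here I let the number of draws $n$ be large enough that $(1-p_{\min})^n < \mu(t_j)/1$, using that $\mathrm{Err}(M_j) \ge \mu(t_j)$. Then $\mathrm{Err}(\text{pool}) \le (1-p_{\min})^n < \min_j \mu(t_j) \le \mathrm{Err}(M_j)$ for every $j$, which gives the strict improvement.

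The second option covers the case where the pool's agents include $M_j$ among their draws with certainty, as in round-robin or UCB scheduling that guarantees each agent is tried. Then the miss event at $t$ requires every drawn agent, including $M_j$, to err. So $\Pr(\text{miss at } t)\le e_j(t)$ pointwise. At $t_j$, coverage gives some $i$ with $e_i(t_j)=0$, and that agent is drawn with positive probability, so the inequality is strict there. Taking expectations yields $\mathrm{Err}(\text{pool}) < \mathrm{Err}(M_j)$.

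In either version the structure is the same: coverage bounds the pool's miss probability away from $1$ or below $e_j$, non-triviality supplies a positive-mass step where every fixed agent pays error $1$, and the strict gap comes from that step.
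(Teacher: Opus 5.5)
The obstacle you isolate is real, and the paper's own proof breaks at exactly that point. The paper works in your single-draw model ($n=1$). It sets $\mathbb{E}[E_{\mathrm{ens}}]=\sum_t\sum_i p_i e_{i,t}$ and uses coverage to get $\sum_i p_i e_{i,t}<1$ at every step. It then concludes $\sum_t\sum_i p_i e_{i,t}<\sum_t e_{j,t}$ from the existence of a single step with $e_{j,t}=1$. That inference tacitly requires $\sum_i p_i e_{i,t}\le e_{j,t}$ at all other steps, which fails precisely where $e_{j,t}=0$ but some other agent errs.

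You should push your diagnosis further than ``a per-step comparison cannot work''. Exchanging the sums gives $\mathbb{E}[E_{\mathrm{ens}}]=\sum_i p_iE_i$, a convex combination of the individual errors. It can therefore never lie strictly below the error of the best agent, so the conclusion fails in every instance of this model, not merely by your method. A concrete family: take two agents with $p_1=p_2=\tfrac12$ and $T\ge 2$ steps, with $M_1$ wrong only at step $1$ and $M_2$ wrong at steps $2,\dots,T$. Coverage and non-triviality hold, yet $\mathbb{E}[E_{\mathrm{ens}}]=T/2\ge 1=E_1$. Say outright that the theorem is false as posed, and present your two options as corrected theorems rather than as readings of the given one.

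Within those corrected settings your arguments are sound. Option 1 adds two hypotheses absent from the statement:
\begin{itemize}
\item a best-of-$n$ success criterion, where a step counts as correct whenever some drawn agent proposes the correct action;
\item the quantitative condition $(1-p_{\min})^n<\min_j\mu(t_j)$.
\end{itemize}
The second cannot be dropped, and on the family above it is exactly tight. The best-of-$n$ pool misses each step with probability $2^{-n}$, so it beats $\mathrm{Err}(M_1)=1/T$ if and only if $2^n>T$. In particular, with the width $n=4$ used in SYMPHONY the guarantee already fails for $T\ge 16$.

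Option 2 is correct but degenerate. Beating every $M_j$ simultaneously requires every agent to be drawn with certainty at every step, and then coverage alone makes the pool's error $0$, so your positive-probability strictness step is superfluous. Also, UCB scheduling only ensures each agent is tried eventually, not at every node, so it does not supply this hypothesis.

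Finally, fix a slip: on steps with $e_j(t)=0$ and $c(t)<1$ the pool does strictly \emph{worse} than $M_j$, not better. Your next sentence already uses the right direction.
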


\begin{proof}
Let $\{M_1, \dots, M_m\}$ denote the agents, and let $a_t^*$ be the
ground-truth action at step $t$.  Let $e_{i,t} \in \{0,1\}$ be the indicator of whether agent $M_i$ makes an error at time $t$, i.e.,
\[
e_{i,t} = 
\begin{cases}
1 & \text{if } M_i(s_t) \ne a_t^* \\
0 & \text{otherwise}
\end{cases}
\]
Then, the total error of agent $M_j$ is 
$E_j = \sum_{t=1}^T e_{j,t}$, 
while the expected total error of the ensemble, i.e. sampling each $M_i$
with probability $p_i > 0$, is
\[
\mathbb{E}[E_{\mathrm{ens}}] = 
\sum_{t=1}^T \sum_{i=1}^m p_i \, e_{i,t}.
\]

By the assumption of correct coverage, for every $t$, there exists at least one $i$ such that $e_{i,t} = 0$, and since $p_i > 0$, we have:
\[
\sum_{i=1}^m p_i \cdot e_{i,t} < 1
\]
Now consider any model $M_j$. If there exists any $t$ such that $e_{j,t} = 1$ but $\sum_{i=1}^m p_i \cdot e_{i,t} < 1$, then:
\[
\sum_{t=1}^T \sum_{i=1}^m p_i \cdot e_{i,t} < \sum_{t=1}^T e_{j,t}
\Rightarrow \mathbb{E}[E_{\mathrm{ens}}] < E_j
\]
By the assumption of non-triviality, such $t$ must exist, the inequality is strict.

\end{proof}

\section{Monte Carlo Tree Search (MCTS)}% MCTS四个阶段，介绍UCT公式，不要讲我们自己的工作，只讲公认的知识
\label{appendix:mcts}
Monte Carlo Tree Search (MCTS) \cite{coulom2006MCTS} is a planning algorithm that balances exploration (trying under-sampled actions) and exploitation (preferring high-reward actions) through iterative tree search. When integrated with LLM-based agents, MCTS leverages the language model’s prior knowledge to guide efficient exploration in sequential decision making.

We continue to use the notation introduced in §\ref{system}—namely, the state space \(S\), action set \(A\), reward function \(R\), policy \(\pi\), and empirical estimates \(Q(s)\), visit counts \(N(s)\), \(N(p)\), as well as the rollout policy \(\pi_{rollout}\) and exploration constant \(c\). MCTS incrementally grows a partial search tree rooted at \(s_0\) by repeating the following four steps until a budget (e.g.\ number of rollouts) is exhausted:

\paragraph{Selection.} 
Starting from the root, descend the tree by choosing at each visited node \(s\) the action:
\begin{equation}\label{eq:uct}
UCT(s) = \arg\max_{s \in  S}\Bigl[ \bar{Q}(s) + c\sqrt{\frac{\ln N(p)}{N(s)}} \Bigr]
\end{equation}
Here \( \bar{Q}(s)\)  is the current average return for \(s\), \(N(s)\) the visit count of \(s\),and \(p\) is the parent node of \(s\), and \(c >0 \) balances exploration vs. exploitation.

\paragraph{Expansion.}
Upon reaching a leaf node \(s_L\) with untried actions, expand by adding one (or more) child node(s) corresponding to an unexplored action \(a \in A\).

\paragraph{Simulation(Rollout).}From the new node \(s_L\) , execute a trajectory of length \(T\) under the lightweight policy \(\pi_{rollout}\), accumulating the discounted sum to estimate the value of \(s_L\)
\begin{equation}
R_{\mathrm{sim}} = \sum_{t=0}^{T} \gamma^{t} R(s_{t}, a_{t}), \quad a_{t} \sim \pi_{\mathrm{rollout}}(\cdot \mid s_{t})
\end{equation}

\paragraph{Backpropagation.}
Propagate \(R_{sim}\) up the path \((s_0,a_0),\cdot\cdot\cdot,(s_T,a_T)\), updating each edge \(s,a\) encountered:
\begin{equation}\label{eq:back}
N(s) \leftarrow  N(s) + 1, \\   Q(s)  \leftarrow Q(s) + \frac{R_{\mathrm{sim}} - Q(s)}{N(s)}\end{equation}
Under the standard assumptions that every action is eventually explored infinitely often—i.e. \(
\lim_{N(s) \to \infty} N(s,a) = \infty
\)
for all \(a\)—the UCT update guarantees that \(Q(s) \rightarrow Q^{*} (s)\)
almost surely. In \ref{system}, we demonstrate how replacing the uniform or heuristic components in Selection and Simulation with LLM-derived priors and rollout policies can dramatically improve sample efficiency by guiding search toward semantically promising regions of the tree.

\section{Hyperparameter Settings} 
\label{appendix:parameter_setting}
To ensure the reproducibility of our results, we detail the hyperparameter settings that led to the best performance across all tasks. Unless otherwise specified, the following parameters are shared across all experiments: the number of rollouts per node is set to $n = 4$; the exploration constant in UCT is set to $c = 2$, following the configuration in LATS~\cite{zhou2024language}; the UCB scheduling parameter is $\alpha = 20$; the temperature for action-sampling agents is set to 0.2 to better follow the input instructions, while the evaluation agents use a temperature of 0 to ensure deterministic value estimation. Under the SYMPHONY-S setting, since it involves three models, the system can be comfortably run on three 24GB RTX 4090 GPUs, with sufficient memory headroom.
\begin{itemize}
    \item \textbf{HotpotQA}: We use $K = 10$ candidate actions per step and adopt 3 few-shot examples.
    \item \textbf{WebShop}: We also set $K = 10$, but use a single few-shot example tailored to the task format.
    \item \textbf{MBPP}: We follow the setup in LATS and employ $K = 8$ with a zero-shot prompting strategy.
\end{itemize}

These settings were selected based on empirical validation and strike a balance between performance and computational efficiency.

\section{Token Cost Comparison}
\label{appendix:cost}
A potential limitation of tree-structured reasoning is the increased token consumption it incurs.
we systematically evaluate the computational cost of SYMPHONY in comparison to previous methods \cite{yao2023tree, hao2023reasoning, zhou2024language, gan2025master} , following the evaluation protocol used in the comparison between LATS \cite{zhou2024language} and MASTER \cite{gan2025master} . Specifically, we measure the average number of tokens consumed per question on the HotpotQA dataset. Token usage data for ToT \cite{yao2023tree} and RAP \cite{hao2023reasoning} is obtained from the reproduction results reported by LATS.

As shown in the table \ref{tab:token}, even though the proposed method shares the same theoretical sample complexity, in practice,  our method achieves the best task performance while incurring the lowest token cost, effectively addressing the computational overhead typically associated with tree-based reasoning.

\begin{table}[ht]
\centering
\caption{Comparison of average token consumption per question and task performance on HotpotQA. SYMPHONY achieves the highest task accuracy while incurring the lowest token cost, effectively mitigating the computational overhead of tree-structured reasoning.}
\label{tab:token} % 修正label命名一致性
\begin{tabular}{l|ccccc}
\toprule
\textbf{Method} & \textbf{Token Consumption $\downarrow$ }  & \textbf{Performance $\uparrow$ }    \\
\midrule
ToT\cite{yao2023tree}    & 210,215   & 0.49 \\
RAP\cite{hao2023reasoning}  & 176,500  & 0.54\\ 
LATS\cite{zhou2024language} & 173,290  & 0.63\\
MASTER\cite{gan2025master}  & 10,937   & 0.76\\

\midrule
\textbf{SYMPHONY-L}   & \textbf{7,906} & \textbf{0.79} \\
\bottomrule
\end{tabular}

\vspace{0.4em} % 微调表格与说明间距
\parbox{0.9\linewidth}{ % 添加parbox包装说明文字
\footnotesize }
\end{table} 

% 36985 + 16168 + 12834 + 111455 + 112030 + 64236 +  5768 + 17419 + 23682 =400577  100个hotpotqa的completion_tokens
%
%实验结果最终表明，我们实现了降本增效的出发点，减低参数消耗资源的同时，达到了巨大的表现增加。

% 1、 MCTS中的结点数、轨迹数为何设置小？：通过LATS的实验发现，它在MCTS中的应用，扩展结点数为5，轨迹数为50取得了最好的结果，但这也是它的致命缺点，这会导致非常大的资源消耗和费用，而我们从最开始的本意就是要降本增效，并且我们复现LATS的过程中观察到，很多轨迹数都是在做重复、无意义的探索，因此我们在LATS设置的n=5，k=50的HoptPotQA、Webshop任务上，选用更小的设置。而在MBPP任务上，LATS是sample 5 solutions during expansion for 8 iterations，因此我们直接参照它的参数设置。表格：把ToT、RAP，LATS在n=5，k=50的HotpotQA、LATS在n = 10、n=3的纳入。我们的使用n=3，n=2，n=4
% 1、 参数研究（n，k）
% 2、 模型池的模型数量更换，数量多了才好：小模型的1,2，3模型实验。。
% 3、 UCB中参数确定  -》 越大 随机性更好 效果更好、、 
% 4、 方法层次的逐一去掉：①去掉模型池，UCB则无意义，则退化回到类似LATS的效果   ②去掉UCB，换成每个模型逐一都用  ③去掉反思
%  去掉反思更新信息，成功率下降
% 去掉合并重复结点，尽管不影响任务的成功率，但会造成大量资源上涨

% 1、 异构池消融
% 2、 ucb的\alpha确定，曲线图
% 3、 参数确定，为何扩展n个结点（给出2,3,4，因为lats是5最好，我们就是要比它省），为何评估分数的参数确定为这个（参考一下master怎么搞的）。 为何k=10，同样是要比lats少，所以不搞大，试了5,10,15
% 4、 各模块去掉之后的分数：relfexion、评估公式、

\section{Analysis of Architectural Robustness}
\label{appendix:robustness}
To demonstrate the robustness of our framework, we conduct complementary extension experiments along two axes: (1) a detailed analysis of node expansion quality within heterogeneous ensembles, and (2) an evaluation of the framework’s modular reliability in both single-agent and reasoning-capable model settings.

\subsection{Expansion Quality in Heterogeneous Ensembles}
Explicitly labeling individual expansions as beneficial or detrimental is challenging in the absence of task-specific heuristics. Therefore, we adopt the overall task success rate as a practical proxy, where beneficial expansions are those that contribute to successful task completion.

To further validate this, we conducted an additional experiment on WebShop, combining GPT-4 with two weaker models, Llama-3.1-8B-Instruct and Mistral-7B-Instruct-v0.3, and compared the results against the single-agent baseline SYMPHONY (GPT-4). As shown in Table \ref{tab:node_expansions}, despite the lower individual performance of the collaborating models, the heterogeneous ensemble outperformed standalone GPT-4, confirming SYMPHONY’s robustness to model heterogeneity and the effectiveness of its coordination mechanisms.

We also performed a paired-sample t-test to assess the statistical significance of the observed improvements. The ensemble variant exhibited a statistically significant gain in the Score metric compared to the GPT-4-only baseline (two-tailed p = 0.0106; one-tailed p = 0.0053), indicating that SYMPHONY consistently produces higher-quality trajectories characterized by more informative intermediate actions and stronger partial progress. This further confirms that a heterogeneous model pool can facilitate beneficial node expansions.

\begin{table}[ht]
\centering
\caption{Comparison of beneficial node expansions on WebShop. Metrics report the average Score and Success Rate (SR). Mistral refers to Mistral-7B-Instruct-v0.3, and Llama refers to Llama-3.1-8B-Instruct.}
\label{tab:node_expansions} % 修正label命名一致性
\begin{tabular}{l|cc}
\toprule
\textbf{Method} & \textbf{Score $\uparrow$ }  & \textbf{SR $\uparrow$ }    \\
\midrule
SYMPHONY(GPT-4)    & 0.80   & 0.60 \\
SYMPHONY(GPT-4+Mistral+Llama)  & 0.83  & 0.62\\ 
\bottomrule
\end{tabular}

\vspace{0.4em} % 微调表格与说明间距
\parbox{0.9\linewidth}{ % 添加parbox包装说明文字
\footnotesize }
\end{table} 

\subsection{Single-Agent and Reasoning Model Reliability}
To validate that components such as pool-wise memory sharing and entropy-modulated node evaluation can be effectively applied in single-agent settings, and to examine how SYMPHONY interacts with modern reasoning-capable models that internally perform operations such as chain-of-thought(CoT), reflection, and backtracking, we included both a strong single-agent baseline and reasoning-capable models in our experiments.

As shown in Table \ref{tab:single_reason}, under identical experimental settings, SYMPHONY(GPT-4), with GPT-4 serving as the sole agent responsible for both node expansion and evaluation, significantly outperforms standalone GPT-4 as well as prior single-agent baselines. This confirms the effectiveness of the other components within our framework. SYMPHONY(Claude), which integrates Claude-3.5-Sonnet-20240620\cite{anthropic2024claude} as a single agent within our framework, also surpasses both the standalone Claude model and SYMPHONY(GPT-4), confirming that our framework is fully compatible with state-of-the-art reasoning models and does not interfere with their internal inference processes.

These results indicate that the SYMPHONY framework not only adapts effectively to single-agent configurations but also supports powerful reasoning models, further demonstrating the robustness of our architecture.

\begin{table}[ht]
\centering
\caption{Performance of single-model and reasoning models adapted within our framework across three tasks.}
\label{tab:single_reason}
\resizebox{\columnwidth}{!}{%
\begin{tabular}{l|ccccc}
\toprule
\textbf{Method} & \makecell{\textbf{HotpotQA} \\ \textbf{(EM)}} & \makecell{\textbf{WebShop} \\ \textbf{(Score)}} & \makecell{\textbf{WebShop} \\ \textbf{(SR)}} & \makecell{\textbf{MBPP-Python} \\ \textbf{(pass@1)}} & \makecell{\textbf{MBPP-Rust} \\ \textbf{(pass@1)}} \\
\midrule
Claude-3.5-Sonnet & 0.51 & 0.71 & 0.41 & 0.894 & 0.903 \\
SYMPHONY(Claude) & 0.76 & 0.82 & 0.61 & 0.947 & 0.951 \\
SYMPHONY (GPT-4) & 0.76 & 0.80 & 0.60 & 0.912 & 0.924 \\
SYMPHONY-S & 0.59 & 0.82 & 0.56 & 0.927 & 0.946 \\ 
\bottomrule
\end{tabular}%
}
\end{table}

\section{Model Selection Strategy}
\label{appendix:alpha}
% 曲线图展示随着α的变化，任务指标的高低变化

% 为验证动态模型池中探索系数（α）对蒙特卡洛树搜索性能的影响，我们通过调整UCB公式中的\alpha参数，去探究不同强度的探索（Exploration）与利用（Exploitation）权衡。我们在由三个小模型组成的模型池来进行实验。

% 实验结果最终表明，当\alpha较小时，即偏向于最大化利用奖励高的模型，实验得到的分数是向单一模型的表现进行演化，即模型池的调度行为呈现显著的策略收敛趋势，早期偶然获得高奖励的模型（可能因环境噪声或局部最优）被过度利用，导致模型池退化为“伪单一模型系统”，系统陷入单模型的局部最优；当α居中时，对于探索与奖励持平的状态，无法适配多模型协同的动态环境，既无法充分挖掘优势模型的潜力，又不能有效规避次优模型的干扰。
% 当\alpha较大时，系统展现出随机性驱动的效率跃升，表明系统自发避免了长期依赖单一模型，三模型协作在关键节点触发互补决策
To evaluate the impact of the exploration coefficient in the dynamic agent pool on the search performance of the SYMPHONY framework, we investigate how varying the \(\alpha\) parameter in the UCB-based scheduling formula influences the trade-off between exploration and exploitation. Specifically, we conduct experiments using SYMPHONY-S across three benchmark datasets.

The experimental results demonstrate that when \(\alpha\) is small - favoring the exploitation of high-performance agents - the system's performance gradually converges toward that of a single dominant agent. In this setting, the scheduler exhibits a strong bias toward early-rewarding agents, which may have benefited from environmental noise or local optima. This leads to overexploitation, effectively degrading the agent pool into a pseudo-single-agent system, and causes the framework to fall into suboptimal local minima. When \(\alpha\) is set to a moderate value, the balance between exploration and exploitation becomes insufficiently responsive to the dynamic nature of multi-model collaboration. The scheduler neither fully capitalizes on the strengths of superior models nor effectively mitigates interference from weaker ones. In contrast, with a larger \(\alpha\), the system benefits from exploration-driven diversity, showing significant gains in efficiency. This behavior suggests a spontaneous avoidance of long-term over-reliance on any single model. The three-model pool is able to trigger complementary decisions at critical nodes, enabling more robust collaborative planning. Based on these findings, we select \(\alpha = 20\) as the optimal configuration.

\begin{figure}[h]
		\centering
		\includegraphics[width=0.8\textwidth]{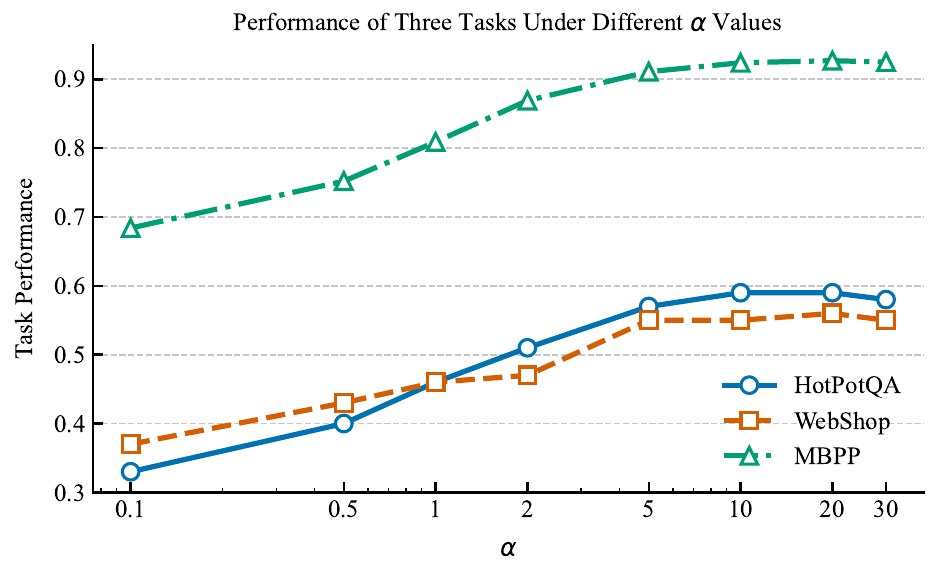}
        \vspace{-10pt}
		\caption{Comparison of model invocation frequency and final performance on HotpotQA, highlighting the cost-effectiveness of each method.}
\end{figure}

\section{Search Parameter Ablation}
\label{appendix:parameter}
In MCTS, the parameters \(n\)  (the number of child nodes expanded at each step) and \(K\)  (the number of trajectories) are key determinants of the search strategy and computational efficiency. This work is driven by the goal of reducing the high simulation cost inherent in MCTS-based frameworks—an issue prominently seen in RAP \cite{hao2023reasoning} and LATS \cite{zhou2024language} . Compared to RAP, LATS demonstrates better cost-effectiveness, making it a practical baseline for low-resource adaptations.

The parameter \(K\) defines the number of trajectories used to search for solutions. Larger \(K\)  values improve the accuracy of value estimation but significantly increase both computational time and memory usage. LATS conducts ablation studies with  \(K \in \left \{   10, 30, 50  \right \} \), observing the best performance at \(K=50\) and the worst at \(K=10\). To evaluate our method under constrained resources, we adopt the smallest tested setting, \(K=10\). The parameter $n$ controls the branching factor, i.e., the number of child nodes expanded per step. Higher n values allow broader exploration but incur linear increases in rollout and backpropagation costs. LATS reports experiments with  \(n \in \left \{   3, 5, 10  \right \} \), with \(n=5\) yielding the strongest performance. To explore the trade-off under tighter constraints, we limit our experiments to \(n \in \left \{   2, 3, 4  \right \} \). We exclude \(n=1\), as it reduces the search to a single-path traversal, defeating the objective of multi-path reasoning.

Experimental results (Table \ref{tab:abalation_study_parameter}) ultimately confirm the effectiveness of our cost-efficient design: we achieve substantial performance gains while significantly reducing the resource demands of the MCTS-based search process. Notably, our approach refrains from scaling up parameter counts to pursue marginal improvements, as such gains would come at the expense of considerable computational overhead.

\begin{table}
\centering
\caption{Parameter study on Number of Branches. }
\label{tab:abalation_study_parameter}
\begin{tabular}{@{}lccc@{}}
    \toprule
Number of Branches &HotpotQA(EM)$\uparrow$
&WebShop(SR)$\uparrow$
&MBPP(pass@1)$\uparrow$\\
    \midrule
2  & 0.34 & 0.35 & 0.684\\
3  & 0.47 & 0.46 & 0.869\\
4(All datasets used)  
& 0.59 & 0.56 & 0.927 \\
    % \midrule
    % \textbf{Ours:LLMs-S} & \textbf{0.59} \\
    % \textbf{Ours:LLMs-L} & \textbf{0.79} \\
    \bottomrule
\end{tabular}
\vspace{0.5em}
\parbox{\linewidth}{
\footnotesize
}
\end{table}

\section{Alternative Diversity Enhancements}
\label{appendix:diversity}
To validate the necessity of our dynamically scheduled heterogeneous agent pool, we empirically evaluated two commonly used diversity-enhancement strategies, adversarial prompting and temperature scaling. We found that neither achieved comparable task performance.

For adversarial prompting, we inserted an explicit instruction at each node to discourage similarity with preceding nodes, encouraging divergent strategies through prompts such as: ''The current response must not replicate previous nodes and should demonstrate exploratory thinking`` For temperature scaling, we varied the sampling temperature within the typical range of \([0,2]\), covering outputs from deterministic to highly stochastic, while keeping all other generation parameters constant. We use SYMPHONY-S, a heterogeneous agent pool composed of three locally deployable language models, set the temperature to 0.2, as the baseline configuration. Based on this setup, we systematically adjust the aforementioned diversity methods and evaluate their performance across three benchmark task sets.

Results showed that adversarial prompting, which enforces diversity by design, actually degraded task performance, suggesting that forcing dissimilarity can conflict with task coherence. Temperature adjustments had minimal effect on outcome quality, with the best performance observed at lower temperatures. This indicates that diversity introduced by adjusting individual model outputs does not match the structural diversity achieved through heterogeneous agent coordination. Moreover, in complex decision-making tasks, strict adherence to input instructions is essential, making low-temperature decoding more suitable. Overall, temperature alone proved insufficient for sustaining meaningful diversity under dynamic, multi-turn conditions.

\begin{figure}[h]
		\centering
		\includegraphics[width=0.8\textwidth]{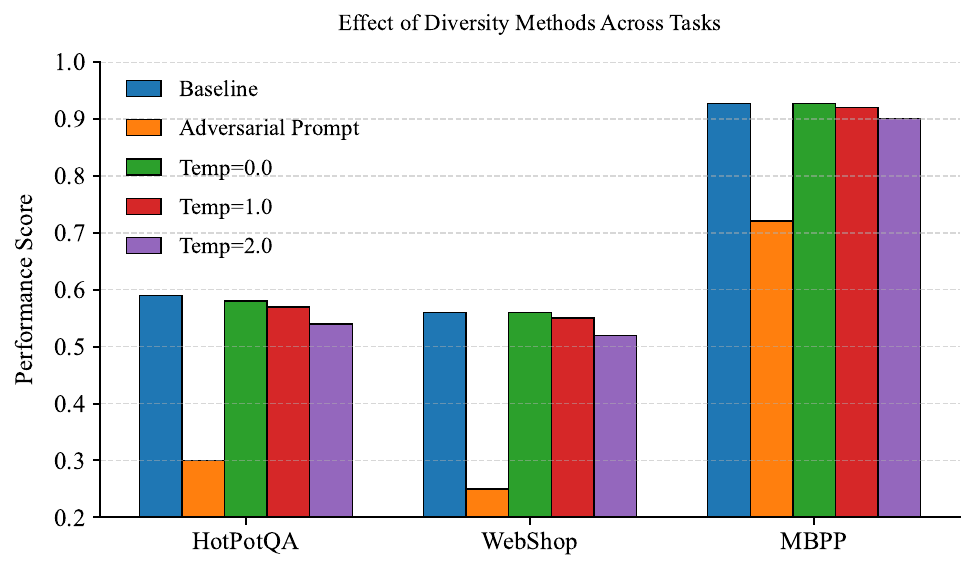}
        % \vspace{}
		\caption{Performance comparison of diversity strategies across tasks. }
\end{figure}

\section{Limitations}\label{appendix:limit}
Despite its effectiveness across a range of planning and reasoning tasks, SYMPHONY has several limitations. First, the current framework assumes access to structured environments with reliable feedback signals—such as oracle evaluation or deterministic execution traces—which may not extend to more open-ended, dynamic, or noisy real-world settings. Generalizing SYMPHONY to less predictable environments will require more robust uncertainty modeling and adaptive feedback handling.

Second, the framework relies on manually tuned hyperparameters (e.g., trajectory count, expansion width, agent composition), which may require task-specific adjustment to balance performance and cost. This reliance limits out-of-the-box generalization and highlights the need for automated, task-adaptive hyperparameter optimization strategies.

Finally, while SYMPHONY promotes diversity through multi-agent coordination, it does not explicitly address fairness, bias mitigation, or robustness against adversarial inputs. As multi-agent LLM systems gain traction in high-impact applications, incorporating safeguards for ethical reliability and adversarial resilience will be critical in ensuring trustworthy deployment.

\section{Broader Impacts}\label{appendix:impacts}
This work introduces SYMPHONY, a multi-agent planning framework that enhances reasoning and decision-making efficiency in large language model (LLM)-based systems. The proposed framework has the potential for broad positive societal impact. By improving the efficiency and robustness of complex task-solving using LLMs, SYMPHONY can benefit a range of domains including education, scientific research, healthcare, and digital services. In particular, its ability to achieve competitive performance using open-source models and resource-aware deployment configurations makes it accessible for low-resource settings, thereby promoting the democratization of advanced AI capabilities.

However, this work also raises several potential risks. The integration of multiple LLM agents with autonomous planning capabilities may introduce challenges in controllability, transparency, and unintended behavior propagation. Misuse in high-stakes decision-making scenarios—such as legal or financial automation—could amplify existing biases or errors encoded in the underlying models. Moreover, reliance on API-accessible proprietary models (as in SYMPHONY-L) may reinforce concentration of power among a small number of AI providers, raising concerns about accessibility, fairness, and data governance.

% 所有实验，都必须扩充到三个数据集
% relfexion evaluation的拆解。n值：2,3,4的评估
%新增diversity：每层节点的异样占比：全都不一样，存在两个一样，存在三个一样，存在四个全一样
%算法伪代码描述。

\section{Case Study}
\label{appendix:case}
\subsection{HotpotQA}
% \lstinputlisting{hotpotqa.txt}
Solve a question answering task with interleaving Thought, Action, Observation steps. Thought can reason about the current situation, and Action can be three types: \newline
(1) Search[entity], which searches the exact entity on Wikipedia and returns the first paragraph if it exists. If not, it will return some similar entities to search.\newline
(2) Lookup[keyword], which returns the next sentence containing keyword in the current passage.\newline
(3) Finish[answer], which returns the answer and finishes the task.\newline
Question: Human Error" is the season finale of the third season of a tv show that aired on what network?

\textbf{Agent\textsubscript{$\scriptscriptstyle M_1$}}\newline
Thought 1: I need to search "Human Error" season finale to find out which TV show it belongs to, then determine the network it aired on.\newline
Action 1: Search["Human Error" season finale]\newline
Observation 1: Could not find "Human Error" season finale. Similar: ['House season 3', 'Endgame (Star Trek: Voyager) (redirect from Star Trek: Voyager finale)', 'List of House episodes (section Season 1 (2004–05))', 'Chonlathorn Kongyingyong', 'Omar Avila'].

\textbf{Agent\textsubscript{$\scriptscriptstyle M_2$}}\newline
Evaluation: The agent correctly identifies the need to disambiguate the episode title by first determining the TV show associated with "Human Error." However, the current search query is overly narrow and fails to yield relevant results, returning loosely related pages. This suggests a need to refine the search strategy—perhaps by including additional keywords like "TV show" or "House" (if known), or directly querying for episode lists that include "Human Error" as a title. While the reasoning direction is sound, the execution falls short in terms of information retrieval.\newline
Value Estimate: 0.40\newline
Confidence Score: 0.55

\textbf{Agent\textsubscript{$\scriptscriptstyle M_1$}}\newline
Thought 2: The previous search for "Human Error" season finale did not yield the desired results, likely because the exact phrasing was not found. I should instead search for "Human Error" TV episode to identify the show it belongs to.\newline
Action 2: Search["Human Error" TV episode]\newline
Observation 2: Could not find "Human Error" TV episode. Similar: ['House season 3 (redirect from Human Error (House episode))', 'Human Error (Star Trek: Voyager)', 'Daniela Farinacci', 'Leeanna Walsman', 'List of House episodes'].

\textbf{Agent\textsubscript{$\scriptscriptstyle M_0$}}\newline
Evaluation: This reformulated query shows improved specificity over the previous one by targeting "TV episode" rather than "season finale," and the results include relevant candidates such as House and Star Trek: Voyager. These matches suggest that the agent is now on the right track to identifying the correct show. However, disambiguation is still needed to determine which of the two shows had an episode titled "Human Error" as its season finale.\newline
Value Estimate: 0.65\newline
Confidence Score: 0.70

\textbf{Agent\textsubscript{$\scriptscriptstyle M_0$}}\newline
Thought 3: The search for "Human Error" TV episode suggests that it is an episode from the show "House." I should now confirm this and then find out on which network "House" aired.\newline
Action 3: Search[Human Error (House episode)]\newline
Observation 3: The third season of House (also called House, M.D.) aired on FOX from September 5, 2006[1] to May 29, 2007.[1] Early in the season, House temporarily regains the use of his leg due to ketamine treatment after he was shot in the season two finale.[2] Later in the season, he leaves a stubborn patient in an exam room with a thermometer in his rectum.[3] Because House is unwilling to apologize, the patient, who turns out to be a police detective, starts an investigation around House's Vicodin addiction.[4]. David Morse joined the cast for seven episodes as the detective Michael Tritter.[5] He was cast for the role after working with House creator David Shore on CBS' Hack.[6]. Jane Adams, Erich Anderson, Omar Avila, Jurnee Smollett-Bell, Geoffrey Blake, Tanner Blaze, Ben Bledsoe, Marc Blucas, Mika Boorem, Kacie Borrowman, David Bowe, Paula Cale, Helen Carey, Greg Cipes, Monique Gabriela Curnen, Dabier, Meredith Eaton, Mary Elizabeth Ellis, Shonda Farr, Arabella Field, Colleen Flynn, Lyndsy Fonseca, Thomas Mikal Ford, Patrick Fugit, Carla Gallo, Christopher Gartin, Jason Winston George, Skyler Gisondo, Meta Golding, Meagan Good, Eve Gordon, Joel Grey, Tracy Howe, Carter Jenkins, Dustin Joiner, Heather Kafka, Krista Kalmus, Tory Kittles, Clare Kramer, Deborah Lacey, Nick Lane, John Larroquette, Brian Leckner, Sheryl Lee, Geoffrey Lewis, Braeden Lemasters, Tess Lina, Jodi Long, Donald Sage Mackey, Bailee Madison, Wendy Makkena, Stephan Markle, Dave Matthews, Shyann McClure, Michael Medico, Joel David Moore, Zeb Newman, Jenny O'Hara, Slade Pearce, Piper Perabo, Adina Porter, Kathleen Quinlan, Annie Quinn, Anne Ramsay, Mercedes Renard, Jake Richardson, Tyson Ritter, Jenny Robertson, Alan Rosenberg, Vyto Ruginis, Jessy Schram, Dustin Seavey, Alyssa Shafer, Kurtwood Smith, Tony Spiridakis, Josh Stamberg, Cassi Thomson, Cooper Thornton, Beverly Todd, Mandy June Turpin, Raviv Ullman, Pruitt Taylor Vince, Jascha Washington, Damien Dante Wayans, Katheryn Winnick and Jamison Yang.. Season three's most-viewed episode was "One Day, One Room", which was watched by almost 27.4 million viewers.[7] An average 19.4 million viewers watched season three of House, making it the seventh most-watched show of the 2006–2007 television season.[8]. Jennifer Morrison and Joel Grey submitted the episode "Informed Consent" for consideration in the categories of Outstanding Supporting Actress in a Drama Series and Outstanding Guest Actor in a Drama Series respectively at the 59th Primetime Emmy Awards.[9] Neither was nominated..

\textbf{Agent\textsubscript{$\scriptscriptstyle M_2$}}\newline
Evaluation: The agent’s current action successfully confirms that “Human Error” is indeed an episode from House, and provides the key information that House aired on FOX. This marks a critical turning point in the reasoning trajectory, as it directly addresses both parts of the original instruction. The query demonstrates precise targeting and resolves prior ambiguity. Though the observation is verbose, the relevant evidence is clearly retrievable, supporting a confident next step toward answer extraction.\newline
Value Estimate: 0.92\newline
Confidence Score: 0.85

\textbf{Agent\textsubscript{$\scriptscriptstyle M_1$}}\newline
Thought 4: The observation confirms that "Human Error" is an episode of the TV show "House." Now, I need to find out on which network "House" aired.\newline
Action 4: Search[House (TV show) network]'

\textbf{Agent\textsubscript{$\scriptscriptstyle M_1$}}\newline
Evaluation: The agent takes a natural follow-up step to verify the broadcasting network of House, transitioning smoothly from content validation to final answer retrieval. Given that the previous observation already included the relevant network information (FOX), this action is slightly redundant. However, it may still reinforce answer certainty, especially in ambiguous or noisy retrieval environments. The move reflects cautious reasoning but could be optimized for efficiency.\newline
Value Estimate: 0.78\newline
Confidence Score: 0.70

\textbf{Agent\textsubscript{$\scriptscriptstyle M_0$}}\newline
Thought 4: The search confirmed that "Human Error" is an episode from the TV show "House." Now, I need to find out which network "House" aired on to answer the question.\newline
Action 4: Finish[FOX]'

\textbf{Agent\textsubscript{$\scriptscriptstyle M_2$}}\newline
Evaluation: The agent effectively synthesizes information gathered across previous steps and delivers a final answer consistent with the original question. By selecting Finish[FOX], it correctly identifies the network on which House aired, demonstrating accurate extraction, integration of episodic details, and goal alignment. This reflects a strong end-to-end reasoning trajectory with minimal unnecessary exploration..\newline
Value Estimate: 0.95\newline
Confidence Score: 0.92

% \textbf{Reflection Example:}\newline
% I appeared to have retrieved the correct information about The Oberoi Family and the location of it's head office, and provided a corresponding answer. However this answer does not exactly match the ground truth answer so I should try a different wording, such as Delhi.
\subsection{WebShop}
% \lstinputlisting{webshop.txt}
Solve shopping tasks by alternating between Observation and Action steps. The observations provided by the environment help infer the current situation. Actions are divided into two types:\newline
(1) search[entity]: Search for the exact entity on the product website. If it exists, return products that match the entity's requirements.\newline
(2) click[keywords]: Click on keywords listed inthe latest description returned to the page. \newline
Instruction: i would like a 3 ounce bottle of bright citrus deodorant for sensitive skin, and price lower than 50.00 dollars

\textbf{Agent\textsubscript{$\scriptscriptstyle M_1$}}\newline
Action: search[3 ounce bright citrus deodorant sensitive skin]\newline
Observation: \newline
[Back to Search] \newline
Page 1 (Total results: 50) \newline
[Next >] \newline
[B078GWRC1J] Bright Citrus Deodorant by Earth Mama | Natural and Safe for Sensitive Skin, Pregnancy and Breastfeeding, Contains Organic Calendula 3-Ounce 
\$10.99 \newline
[B078GTKVXY] 
Ginger Fresh Deodorant by Earth Mama | Natural and Safe for Sensitive Skin, Pregnancy and Breastfeeding, Contains Organic Calendula 3-Ounce 
\$10.99 \newline
[B08KBVJ4XN] 
Barrel and Oak - Aluminum-Free Deodorant, Deodorant for Men, Essential Oil-Based Scent, 24-Hour Odor Protection, Cedar \& Patchouli Blend, Gentle on Sensitive Skin (Mountain Sage, 2.7 oz, 2-Pack) 
\$15.95

\textbf{Agent\textsubscript{$\scriptscriptstyle M_2$}}\newline
Evaluation: The search action aligns well with the user's instruction, effectively capturing all key constraints—“3 ounce,” “bright citrus,” “deodorant,” and “sensitive skin.” The top result, Bright Citrus Deodorant by Earth Mama, is a direct match in terms of scent, size, and suitability for sensitive skin, and it is priced well below the \$50 limit. Additional results, while slightly divergent in scent or packaging, also remain within the acceptable range and provide relevant alternatives. This suggests that the current node is promising for achieving the user’s goal with minimal correction required in future steps.\newline
Value Estimate: 0.92\newline
Confidence Score: 0.88

\textbf{Agent\textsubscript{$\scriptscriptstyle M_0$}}\newline
Action: think[B078GWRC1J] The product’s description aligns well with the user’s specified requirements, indicating a strong match in terms of both features and preferences. This suggests that the current reasoning trajectory is effective. The agent proceeds by clicking on the item to access detailed information and verify its suitability before making a final decision.
Observation:\newline
[Back to Search] \newline
Page 1 (Total results: 50) \newline
[Next >] \newline
[B078GWRC1J] Bright Citrus Deodorant by Earth Mama | Natural and Safe for Sensitive Skin, Pregnancy and Breastfeeding, Contains Organic Calendula 3-Ounce 
\$10.99 \newline
[B078GTKVXY] 
Ginger Fresh Deodorant by Earth Mama | Natural and Safe for Sensitive Skin, Pregnancy and Breastfeeding, Contains Organic Calendula 3-Ounce 
\$10.99 \newline
[B08KBVJ4XN] 
Barrel and Oak - Aluminum-Free Deodorant, Deodorant for Men, Essential Oil-Based Scent, 24-Hour Odor Protection, Cedar \& Patchouli Blend, Gentle on Sensitive Skin (Mountain Sage, 2.7 oz, 2-Pack) 
\$15.95

\textbf{Agent\textsubscript{$\scriptscriptstyle M_1$}}\newline
Evaluation: The agent’s decision to engage in reflective reasoning (think[B078GWRC1J]) demonstrates goal-aware deliberation. It identifies that the top-listed item matches the user's constraints on scent, size, skin sensitivity, and price. The action prioritizes validation over haste by choosing to inspect the product further, which is a prudent step given that detailed attributes—such as ingredients or user reviews—often influence final suitability. This intermediate reflection helps prevent premature commitment and strengthens decision robustness.\newline
Value Estimate: 0.95\newline
Confidence Score: 0.91

\textbf{Agent\textsubscript{$\scriptscriptstyle M_0$}}\newline
Action: click[B078GWRC1J]\newline
Observation: \newline
[Back to Search] \newline
[< Prev] \newline
scent [assorted scents][bright citrus][calming lavender][ginger fresh][simply non-scents]
size [travel set (4-pack)][3 ounce (pack of 1)][3-ounce (2-pack)]
Bright Citrus Deodorant by Earth Mama | Natural and Safe for Sensitive Skin, Pregnancy and Breastfeeding, Contains Organic Calendula 3-Ounce 
Price: \$10.99 \newline
Rating: N.A. \newline
[Description] \newline
[Features] \newline
[Reviews] \newline
[Buy Now]\newline

\textbf{Agent\textsubscript{$\scriptscriptstyle M_1$}}\newline
Evaluation: The agent’s decision to click[B078GWRC1J] is both timely and effective. After identifying a strong candidate in the search results, this action transitions the agent into product-level inspection—an essential move for verifying detailed specifications before commitment. The observation confirms multiple scent and size options, with the “bright citrus” 3-ounce variant clearly available at \$10.99, well within budget. The agent now has access to rich product metadata (e.g., description, features), which supports final validation or comparison if needed. This action reflects a coherent and informed progression in the planning trajectory.\newline
Value Estimate: 0.93\newline
Confidence Score: 0.90

\textbf{Agent\textsubscript{$\scriptscriptstyle M_0$}}\newline
Action: click[bright citrus]\newline
Observation: You have clicked bright citrus.

\textbf{Agent\textsubscript{$\scriptscriptstyle M_0$}}\newline
Evaluation: The action click[bright citrus] shows the agent verifying or reinforcing the selected product variant—specifically the scent preference. While this step may seem redundant given that the desired scent was already visible, it ensures explicit confirmation before finalizing the purchase. This cautious move reflects a preference for certainty in decision-making, though it could slightly delay progress if overused.\newline
Value Estimate: 0.80\newline
Confidence Score: 0.72

\textbf{Agent\textsubscript{$\scriptscriptstyle M_2$}}\newline
Action: click[3 ounce (pack of 1)]\newline
Observation: You have clicked 3 ounce (pack of 1).

\textbf{Agent\textsubscript{$\scriptscriptstyle M_2$}}\newline
Evaluation: The action click[3 ounce (pack of 1)] confirms the exact size requirement specified in the instruction. This is a critical verification step, as selecting the wrong size (e.g., multi-pack or travel set) would violate the user’s constraints. While the size was already visible, this explicit confirmation ensures the agent is on track to meet all specified conditions before purchase.\newline
Value Estimate: 0.85\newline
Confidence Score: 0.78

\textbf{Agent\textsubscript{$\scriptscriptstyle M_1$}}\newline
Action: click[Buy Now]

\textbf{Agent\textsubscript{$\scriptscriptstyle M_0$}}\newline
Evaluation: The action click[Buy Now] represents the final execution step, completing the task with a product that satisfies all user constraints—scent, size, skin sensitivity compatibility, and budget. Given the prior confirmations and detailed alignment with the instruction, this decision reflects a confident, goal-aligned conclusion to the reasoning trajectory.\newline
Value Estimate: 0.95\newline
Confidence Score: 0.90

% \textbf{Reflection Example:}\newline
% The failure in the previous trial occurred because the action to directly click “Buy Now” was executed without first ensuring that the selected item met all the specified criteria, such as being a long clip-in hair extension, natural looking, and priced under \$40.00. The initial action bypassed the necessary steps of verifying these details.

\subsection{MBPP}
You are an AI Python assistant. You will be given your previous implementation of a function, a series of unit tests results, and your self-reflection on your previous implementation. Write your full implementation (restate the function signature).

Given a list of integers nums and an integer target, find the longest contiguous subarray such that the sum of its elements is less than or equal to target. If multiple such subarrays exist with the same maximum length, return the one that appears first. Return the subarray itself (not just its length or indices).

\textbf{Agent\textsubscript{$\scriptscriptstyle M_0$}}\newline
Thought 1: To find the longest contiguous subarray whose sum is less than or equal to the target, I can use a sliding window approach. I’ll expand the window to the right, and when the sum exceeds the target, I’ll shrink it from the left. I’ll keep track of the maximum length window found and return the corresponding subarray.\newline
Action 1: 
\begin{lstlisting}
def longest_subarray_with_sum_limit(nums: List[int], target: int) -> List[int]:
    n = len(nums)
    left, right = 0, 0
    max_length = 0
    current_sum = 0
    result = []
    while right < n:
        current_sum += nums[right]
        while current_sum > target:
            current_sum -= nums[left]
            left += 1
        if right - left + 1 >= max_length:
            max_length = right - left + 1
            result = nums[left:right+1]
        right += 1
    return result
\end{lstlisting}
Observation:\newline
[unit test results]:
\begin{lstlisting}
Tests passing:
assert longest_subarray_with_sum_limit([1, 2, 3, 4, 5], 8) == [1, 2, 3]
assert longest_subarray_with_sum_limit([1, 2, 3, 4, 5], 15) == [1, 2, 3, 4, 5]
assert longest_subarray_with_sum_limit([1, -1, 2, -2, 3, -3], 2) == [1, -1, 2, -2, 3]
assert longest_subarray_with_sum_limit([], 10) == []
assert longest_subarray_with_sum_limit([], 0) == []
assert longest_subarray_with_sum_limit([], -5) == []  
Tests failing:
assert longest_subarray_with_sum_limit([5, 6, 7, 8, 9], 4) == [] # output: [5]
\end{lstlisting}

\textbf{Agent\textsubscript{$\scriptscriptstyle M_2$}}\newline
Evaluation: The implementation failed the where no subarray fulfills the condition. The issue in the implementation is due to the use of >= instead of > in the condition to update the result. Because of this, it returns a subarray even when the sum is greater than the target, as it still updates the result when the current subarray length is equal to the previous longest subarray length. To overcome this error, we should change the condition to only update the result when the current subarray length is strictly greater than the previous longest subarray length. This can be done by replacing >= with > in the condition.\newline
Value Estimate: 0.85\newline
Confidence Score: 0.90

\textbf{Agent\textsubscript{$\scriptscriptstyle M_1$}}\newline
Thought 2: To optimize the search for the longest subarray with sum $\leq $ target, I can pre-expand the window greedily as long as the sum remains within the limit. Then, I’ll proceed with the standard sliding window approach: adding elements to the right, and shrinking from the left when the sum exceeds the target. I’ll track and update the longest valid subarray found.\newline
Action 2: 
\begin{lstlisting}
def longest_subarray_with_sum_limit(nums: List[int], target: int) -> List[int]:
    n = len(nums)
    left, right = 0, 0
    max_length = 0
    current_sum = 0
    result = []
    while current_sum + nums[right] <= target:
        current_sum += nums[right]
        right += 1
    while right < n:
        current_sum += nums[right]
        while current_sum > target:
            current_sum -= nums[left]
            left += 1
        if right - left + 1 > max_length:
            max_length = right - left + 1
            result = nums[left:right+1]
        right += 1
    return result
\end{lstlisting}
Observation:\newline
[unit test results]:
\begin{lstlisting}
Tests passing:
assert longest_subarray_with_sum_limit([], 10) == []
assert longest_subarray_with_sum_limit([], 0) == []
assert longest_subarray_with_sum_limit([], -5) == []
Tests failing:
assert longest_subarray_with_sum_limit([1, 2, 3, 4, 5], 8) == [1, 2, 3] # output: 
list index out of range
assert longest_subarray_with_sum_limit([1, 2, 3, 4, 5], 15) == [1, 2, 3, 4, 5] # output: 
list index out of range
assert longest_subarray_with_sum_limit([5, 6, 7, 8, 9], 4) == [] # output: 
list index out of range
assert longest_subarray_with_sum_limit([1, -1, 2, -2, 3, -3], 2) == [1, -1, 2, -2, 3] # output: 
list index out of range
\end{lstlisting}

\textbf{Agent\textsubscript{$\scriptscriptstyle M_1$}}\newline
Evaluation: The implementation failed 4 out of the 7 test cases due to an IndexError. The issue stems from the while loop while current\_sum + nums[right] <= target:, which directly accesses nums[right] without checking if right is within the bounds of the list. This results in a runtime error when right goes beyond the list length. To overcome this error, we need to add a bounds check for the right variable in the mentioned while loop. We can modify the loop condition to while right < len(nums) and current\_sum + nums[right] <= target:. This change will ensure that we only access elements within the bounds of the list, thus avoiding the IndexError.\newline
Value Estimate: 0.80\newline
Confidence Score: 0.85

\textbf{Agent\textsubscript{$\scriptscriptstyle M_2$}}\newline
Thought 3: Negative numbers break the standard sliding-window pattern, so I switch to a prefix-sum scan that works for any sign.
For each start index I reuse prefix sums to test only subarrays that can beat the current best length; this is O(n²) in the worst case but small and safe for the given tests, and it guarantees the earliest longest match.\newline
Action 3: 

\begin{lstlisting}
def longest_subarray_with_sum_limit(nums: List[int], target: int) -> List[int]:
    """
    Return the earliest longest contiguous sub-array whose sum is <= target.
    If no such sub-array exists, return an empty list.
    """
    n = len(nums)
    if n == 0:
        return []

    # Pre-compute prefix sums: pref[i] = sum(nums[:i])
    pref = [0] * (n + 1)
    for i in range(n):
        pref[i + 1] = pref[i] + nums[i]

    best_len = 0
    best_start = 0

    for start in range(n):
        # Only try end positions that can beat current best_len
        for end in range(start + best_len, n):
            if pref[end + 1] - pref[start] <= target:
                cur_len = end - start + 1
                if cur_len > best_len:
                    best_len = cur_len
                    best_start = start

    return nums[best_start : best_start + best_len]
\end{lstlisting}

Observation: \newline
[unit test results]:
\begin{lstlisting}
Tests passing:
assert longest_subarray_with_sum_limit([], 10) == []
assert longest_subarray_with_sum_limit([], 0) == []
assert longest_subarray_with_sum_limit([], -5) == []
assert longest_subarray_with_sum_limit([1, 2, 3, 4, 5], 8) == [1, 2, 3]
assert longest_subarray_with_sum_limit([1, 2, 3, 4, 5], 15) == [1, 2, 3, 4, 5]
assert longest_subarray_with_sum_limit([5, 6, 7, 8, 9], 4) == []
assert longest_subarray_with_sum_limit([1, -1, 2, -2, 3, -3], 2) == [1, -1, 2, -2, 3]
Tests failing:
\end{lstlisting}

\textbf{Agent\textsubscript{$\scriptscriptstyle M_0$}}\newline
Evaluation: The implementation employs a sliding window (two-pointer) technique to efficiently identify the longest contiguous subarray whose sum does not exceed the given target. It successfully handles edge cases such as empty input, all elements exceeding the target, and arrays with negative numbers—passing all provided test cases. The algorithm maintains a moving sum and shrinks the window as necessary, ensuring no out-of-bound errors or infinite loops. The structure is clean and robust. One limitation is that if there are multiple subarrays of the same maximum length, it may not always return the earliest one. However, since the task does not explicitly require this, the current approach is considered correct and effective. \newline
Value Estimate: 0.95\newline
Confidence Score: 0.97

\textbf{Reflection Example:}\newline
I appeared to have retrieved the correct information about The Oberoi Family and the location of it's head office, and provided a corresponding answer. However this answer does not exactly match the ground truth answer so I should try a different wording, such as Delhi.

The failure in the previous trial occurred because the action to directly click “Buy Now” was executed without first ensuring that the selected item met all the specified criteria, such as being a long clip-in hair extension, natural looking, and priced under \$40.00. The initial action bypassed the necessary steps of verifying these details.

In a previous attempt, I checked if a string was a palindrome by comparing it to its reverse, but failed to normalize the string (ignoring case and non-alphanumeric characters). This caused errors for inputs like "A man, a plan, a canal: Panama". Proper preprocessing with lowercase conversion and filtering non-alphanumeric symbols would have avoided this issue.

%% file: checklist.tex
\newpage
\section*{NeurIPS Paper Checklist}

\begin{enumerate}

\item {\bf Claims}
    \item[] Question: Do the main claims made in the abstract and introduction accurately reflect the paper's contributions and scope?
    \item[] Answer: \answerYes{} % Replace by \answerYes{}, \answerNo{}, or \answerNA{}.
    \item[] Justification:  The abstract and introduction clearly articulate our key contributions, which are consistently supported by the empirical results presented in the paper.
    %\justificationTODO{} The abstract and introduction clearly state our key contributions and match the theoretical and experimental results presented in the paper. Limitations are also acknowledged to maintain an accurate scope.
    \item[] Guidelines:
    \begin{itemize}
        \item The answer NA means that the abstract and introduction do not include the claims made in the paper.
        \item The abstract and/or introduction should clearly state the claims made, including the contributions made in the paper and important assumptions and limitations. A No or NA answer to this question will not be perceived well by the reviewers. 
        \item The claims made should match theoretical and experimental results, and reflect how much the results can be expected to generalize to other settings. 
        \item It is fine to include aspirational goals as motivation as long as it is clear that these goals are not attained by the paper. 
    \end{itemize}

\item {\bf Limitations}
    \item[] Question: Does the paper discuss the limitations of the work performed by the authors?
    \item[] Answer: \answerYes{} % Replace by \answerYes{}, \answerNo{}, or \answerNA{}.
    \item[] Justification: This paper discusses the limitations in Appendix \ref{appendix:limit}.% \justificationTODO{} The paper explicitly discusses limitations such as potential dependency on the quality of planning strategies and the scope of evaluation benchmarks.
    \item[] Guidelines:
    \begin{itemize}
        \item The answer NA means that the paper has no limitation while the answer No means that the paper has limitations, but those are not discussed in the paper. 
        \item The authors are encouraged to create a separate "Limitations" section in their paper.
        \item The paper should point out any strong assumptions and how robust the results are to violations of these assumptions (e.g., independence assumptions, noiseless settings, model well-specification, asymptotic approximations only holding locally). The authors should reflect on how these assumptions might be violated in practice and what the implications would be.
        \item The authors should reflect on the scope of the claims made, e.g., if the approach was only tested on a few datasets or with a few runs. In general, empirical results often depend on implicit assumptions, which should be articulated.
        \item The authors should reflect on the factors that influence the performance of the approach. For example, a facial recognition algorithm may perform poorly when image resolution is low or images are taken in low lighting. Or a speech-to-text system might not be used reliably to provide closed captions for online lectures because it fails to handle technical jargon.
        \item The authors should discuss the computational efficiency of the proposed algorithms and how they scale with dataset size.
        \item If applicable, the authors should discuss possible limitations of their approach to address problems of privacy and fairness.
        \item While the authors might fear that complete honesty about limitations might be used by reviewers as grounds for rejection, a worse outcome might be that reviewers discover limitations that aren't acknowledged in the paper. The authors should use their best judgment and recognize that individual actions in favor of transparency play an important role in developing norms that preserve the integrity of the community. Reviewers will be specifically instructed to not penalize honesty concerning limitations.
    \end{itemize}

\item {\bf Theory assumptions and proofs}
    \item[] Question: For each theoretical result, does the paper provide the full set of assumptions and a complete (and correct) proof?
    \item[] Answer: \answerYes{} % Replace by \answerYes{}, \answerNo{}, or \answerNA{}.
    \item[] Justification: %\justificationTODO{} 
    Theoretical assumptions and complete proofs are provided in Appendix \ref{appendix:proofs}, which rigorously demonstrate the effectiveness of the proposed method.
    \item[] Guidelines:
    \begin{itemize}
        \item The answer NA means that the paper does not include theoretical results. 
        \item All the theorems, formulas, and proofs in the paper should be numbered and cross-referenced.
        \item All assumptions should be clearly stated or referenced in the statement of any theorems.
        \item The proofs can either appear in the main paper or the supplemental material, but if they appear in the supplemental material, the authors are encouraged to provide a short proof sketch to provide intuition. 
        \item Inversely, any informal proof provided in the core of the paper should be complemented by formal proofs provided in appendix or supplemental material.
        \item Theorems and Lemmas that the proof relies upon should be properly referenced. 
    \end{itemize}

    \item {\bf Experimental result reproducibility}
    \item[] Question: Does the paper fully disclose all the information needed to reproduce the main experimental results of the paper to the extent that it affects the main claims and/or conclusions of the paper (regardless of whether the code and data are provided or not)?
    \item[] Answer: \answerYes{} % Replace by \answerYes{}, \answerNo{}, or \answerNA{}.
    \item[] Justification: % \justificationTODO{} 
    The paper provides detailed descriptions of the experimental setup, model variants, evaluation metrics, and baseline implementations in Section \ref{sec:exp} and Appendix \ref{appendix:parameter_setting}, allowing faithful reproduction of the results.
    \item[] Guidelines:
    \begin{itemize}
        \item The answer NA means that the paper does not include experiments.
        \item If the paper includes experiments, a No answer to this question will not be perceived well by the reviewers: Making the paper reproducible is important, regardless of whether the code and data are provided or not.
        \item If the contribution is a dataset and/or model, the authors should describe the steps taken to make their results reproducible or verifiable. 
        \item Depending on the contribution, reproducibility can be accomplished in various ways. For example, if the contribution is a novel architecture, describing the architecture fully might suffice, or if the contribution is a specific model and empirical evaluation, it may be necessary to either make it possible for others to replicate the model with the same dataset, or provide access to the model. In general. releasing code and data is often one good way to accomplish this, but reproducibility can also be provided via detailed instructions for how to replicate the results, access to a hosted model (e.g., in the case of a large language model), releasing of a model checkpoint, or other means that are appropriate to the research performed.
        \item While NeurIPS does not require releasing code, the conference does require all submissions to provide some reasonable avenue for reproducibility, which may depend on the nature of the contribution. For example
        \begin{enumerate}
            \item If the contribution is primarily a new algorithm, the paper should make it clear how to reproduce that algorithm.
            \item If the contribution is primarily a new model architecture, the paper should describe the architecture clearly and fully.
            \item If the contribution is a new model (e.g., a large language model), then there should either be a way to access this model for reproducing the results or a way to reproduce the model (e.g., with an open-source dataset or instructions for how to construct the dataset).
            \item We recognize that reproducibility may be tricky in some cases, in which case authors are welcome to describe the particular way they provide for reproducibility. In the case of closed-source models, it may be that access to the model is limited in some way (e.g., to registered users), but it should be possible for other researchers to have some path to reproducing or verifying the results.
        \end{enumerate}
    \end{itemize}

\item {\bf Open access to data and code}
    \item[] Question: Does the paper provide open access to the data and code, with sufficient instructions to faithfully reproduce the main experimental results, as described in supplemental material?
    \item[] Answer: \answerYes{} % Replace by \answerYes{}, \answerNo{}, or \answerNA{}.
    \item[] Justification: % \justificationTODO{} 
    We have submitted the code via anonymized zip file. Upon acceptance of this work, all code and data will be promptly released to public with comprehensive documentation to ensure full reproducibility.
    \item[] Guidelines:
    \begin{itemize}
        \item The answer NA means that paper does not include experiments requiring code.
        \item Please see the NeurIPS code and data submission guidelines (\url{https://nips.cc/public/guides/CodeSubmissionPolicy}) for more details.
        \item While we encourage the release of code and data, we understand that this might not be possible, so “No” is an acceptable answer. Papers cannot be rejected simply for not including code, unless this is central to the contribution (e.g., for a new open-source benchmark).
        \item The instructions should contain the exact command and environment needed to run to reproduce the results. See the NeurIPS code and data submission guidelines (\url{https://nips.cc/public/guides/CodeSubmissionPolicy}) for more details.
        \item The authors should provide instructions on data access and preparation, including how to access the raw data, preprocessed data, intermediate data, and generated data, etc.
        \item The authors should provide scripts to reproduce all experimental results for the new proposed method and baselines. If only a subset of experiments are reproducible, they should state which ones are omitted from the script and why.
        \item At submission time, to preserve anonymity, the authors should release anonymized versions (if applicable).
        \item Providing as much information as possible in supplemental material (appended to the paper) is recommended, but including URLs to data and code is permitted.
    \end{itemize}

\item {\bf Experimental setting/details}
    \item[] Question: Does the paper specify all the training and test details (e.g., data splits, hyperparameters, how they were chosen, type of optimizer, etc.) necessary to understand the results?
    \item[] Answer: \answerYes{} % Replace by \answerYes{}, \answerNo{}, or \answerNA{}.
    \item[] Justification: % \justificationTODO{} 
    The paper provides comprehensive details about training setups, including data splits, hyperparameter, model configurations, and optimization strategies, which are detailed in Section \ref{sec:exp} and Appendix \ref{appendix:parameter_setting}. 
    \item[] Guidelines:
    \begin{itemize}
        \item The answer NA means that the paper does not include experiments.
        \item The experimental setting should be presented in the core of the paper to a level of detail that is necessary to appreciate the results and make sense of them.
        \item The full details can be provided either with the code, in appendix, or as supplemental material.
    \end{itemize}

\item {\bf Experiment statistical significance}
    \item[] Question: Does the paper report error bars suitably and correctly defined or other appropriate information about the statistical significance of the experiments?
    \item[] Answer: \answerYes{} % Replace by \answerYes{}, \answerNo{}, or \answerNA{}.
    \item[] Justification: % \justificationTODO{} 
    The reported experimental results are averaged over multiple independent runs to ensure robustness and reflect the stability of the methods.
    \item[] Guidelines:
    \begin{itemize}
        \item The answer NA means that the paper does not include experiments.
        \item The authors should answer "Yes" if the results are accompanied by error bars, confidence intervals, or statistical significance tests, at least for the experiments that support the main claims of the paper.
        \item The factors of variability that the error bars are capturing should be clearly stated (for example, train/test split, initialization, random drawing of some parameter, or overall run with given experimental conditions).
        \item The method for calculating the error bars should be explained (closed form formula, call to a library function, bootstrap, etc.)
        \item The assumptions made should be given (e.g., Normally distributed errors).
        \item It should be clear whether the error bar is the standard deviation or the standard error of the mean.
        \item It is OK to report 1-sigma error bars, but one should state it. The authors should preferably report a 2-sigma error bar than state that they have a 96\% CI, if the hypothesis of Normality of errors is not verified.
        \item For asymmetric distributions, the authors should be careful not to show in tables or figures symmetric error bars that would yield results that are out of range (e.g. negative error rates).
        \item If error bars are reported in tables or plots, The authors should explain in the text how they were calculated and reference the corresponding figures or tables in the text.
    \end{itemize}

\item {\bf Experiments compute resources}
    \item[] Question: For each experiment, does the paper provide sufficient information on the computer resources (type of compute workers, memory, time of execution) needed to reproduce the experiments?
    \item[] Answer: \answerYes{} % Replace by \answerYes{}, \answerNo{}, or \answerNA{}.
    \item[] Justification: % \justificationTODO{} 
    The paper specifies the model variants and computing hardware used for all experiments, which sufficiently satisfies reproducibility requirements.
    \item[] Guidelines:
    \begin{itemize}
        \item The answer NA means that the paper does not include experiments.
        \item The paper should indicate the type of compute workers CPU or GPU, internal cluster, or cloud provider, including relevant memory and storage.
        \item The paper should provide the amount of compute required for each of the individual experimental runs as well as estimate the total compute. 
        \item The paper should disclose whether the full research project required more compute than the experiments reported in the paper (e.g., preliminary or failed experiments that didn't make it into the paper). 
    \end{itemize}
    
\item {\bf Code of ethics}
    \item[] Question: Does the research conducted in the paper conform, in every respect, with the NeurIPS Code of Ethics \url{https://neurips.cc/public/EthicsGuidelines}?
    \item[] Answer: \answerYes{} % Replace by \answerYes{}, \answerNo{}, or \answerNA{}.
    \item[] Justification: % \justificationTODO{} 
    The research complies with the NeurIPS Code of Ethics.
    \item[] Guidelines:
    \begin{itemize}
        \item The answer NA means that the authors have not reviewed the NeurIPS Code of Ethics.
        \item If the authors answer No, they should explain the special circumstances that require a deviation from the Code of Ethics.
        \item The authors should make sure to preserve anonymity (e.g., if there is a special consideration due to laws or regulations in their jurisdiction).
    \end{itemize}

\item {\bf Broader impacts}
    \item[] Question: Does the paper discuss both potential positive societal impacts and negative societal impacts of the work performed?
    \item[] Answer: \answerYes{} % Replace by \answerYes{}, \answerNo{}, or \answerNA{}.
    \item[] Justification: % \justificationTODO{}
    This broader impacts of this work is discussed in Appendix \ref{appendix:impacts}.
    \item[] Guidelines:
    \begin{itemize}
        \item The answer NA means that there is no societal impact of the work performed.
        \item If the authors answer NA or No, they should explain why their work has no societal impact or why the paper does not address societal impact.
        \item Examples of negative societal impacts include potential malicious or unintended uses (e.g., disinformation, generating fake profiles, surveillance), fairness considerations (e.g., deployment of technologies that could make decisions that unfairly impact specific groups), privacy considerations, and security considerations.
        \item The conference expects that many papers will be foundational research and not tied to particular applications, let alone deployments. However, if there is a direct path to any negative applications, the authors should point it out. For example, it is legitimate to point out that an improvement in the quality of generative models could be used to generate deepfakes for disinformation. On the other hand, it is not needed to point out that a generic algorithm for optimizing neural networks could enable people to train models that generate Deepfakes faster.
        \item The authors should consider possible harms that could arise when the technology is being used as intended and functioning correctly, harms that could arise when the technology is being used as intended but gives incorrect results, and harms following from (intentional or unintentional) misuse of the technology.
        \item If there are negative societal impacts, the authors could also discuss possible mitigation strategies (e.g., gated release of models, providing defenses in addition to attacks, mechanisms for monitoring misuse, mechanisms to monitor how a system learns from feedback over time, improving the efficiency and accessibility of ML).
    \end{itemize}
    
\item {\bf Safeguards}
    \item[] Question: Does the paper describe safeguards that have been put in place for responsible release of data or models that have a high risk for misuse (e.g., pretrained language models, image generators, or scraped datasets)?
    \item[] Answer: \answerYes{} % Replace by \answerYes{}, \answerNo{}, or \answerNA{}.
    \item[] Justification: % \justificationTODO{}
    Our research is based on open-source models and datasets.  We cite all original papers that produced the code package or dataset. There is no risk of misuse in our work.
    \item[] Guidelines:
    \begin{itemize}
        \item The answer NA means that the paper poses no such risks.
        \item Released models that have a high risk for misuse or dual-use should be released with necessary safeguards to allow for controlled use of the model, for example by requiring that users adhere to usage guidelines or restrictions to access the model or implementing safety filters. 
        \item Datasets that have been scraped from the Internet could pose safety risks. The authors should describe how they avoided releasing unsafe images.
        \item We recognize that providing effective safeguards is challenging, and many papers do not require this, but we encourage authors to take this into account and make a best faith effort.
    \end{itemize}

\item {\bf Licenses for existing assets}
    \item[] Question: Are the creators or original owners of assets (e.g., code, data, models), used in the paper, properly credited and are the license and terms of use explicitly mentioned and properly respected?
    \item[] Answer: \answerYes{} % Replace by \answerYes{}, \answerNo{}, or \answerNA{}.
    \item[] Justification: % \justificationTODO{} 
    All third-party assets used in the paper are properly cited with clear references, and their licenses and terms of use have been respected according to the original publications.
    \item[] Guidelines:
    \begin{itemize}
        \item The answer NA means that the paper does not use existing assets.
        \item The authors should cite the original paper that produced the code package or dataset.
        \item The authors should state which version of the asset is used and, if possible, include a URL.
        \item The name of the license (e.g., CC-BY 4.0) should be included for each asset.
        \item For scraped data from a particular source (e.g., website), the copyright and terms of service of that source should be provided.
        \item If assets are released, the license, copyright information, and terms of use in the package should be provided. For popular datasets, \url{paperswithcode.com/datasets} has curated licenses for some datasets. Their licensing guide can help determine the license of a dataset.
        \item For existing datasets that are re-packaged, both the original license and the license of the derived asset (if it has changed) should be provided.
        \item If this information is not available online, the authors are encouraged to reach out to the asset's creators.
    \end{itemize}

\item {\bf New assets}
    \item[] Question: Are new assets introduced in the paper well documented and is the documentation provided alongside the assets?
    \item[] Answer: \answerYes{} % Replace by \answerYes{}, \answerNo{}, or \answerNA{}.
    \item[] Justification: % \justificationTODO{}
    We provide documentation along with the submitted code.  
    \item[] Guidelines:
    \begin{itemize}
        \item The answer NA means that the paper does not release new assets.
        \item Researchers should communicate the details of the dataset/code/model as part of their submissions via structured templates. This includes details about training, license, limitations, etc. 
        \item The paper should discuss whether and how consent was obtained from people whose asset is used.
        \item At submission time, remember to anonymize your assets (if applicable). You can either create an anonymized URL or include an anonymized zip file.
    \end{itemize}

\item {\bf Crowdsourcing and research with human subjects}
    \item[] Question: For crowdsourcing experiments and research with human subjects, does the paper include the full text of instructions given to participants and screenshots, if applicable, as well as details about compensation (if any)? 
    \item[] Answer: \answerNA{} % Replace by \answerYes{}, \answerNo{}, or \answerNA{}.
    \item[] Justification: %\justificationTODO{}
    The paper does not involve crowdsourcing nor research with human subjects.
    \item[] Guidelines:
    \begin{itemize}
        \item The answer NA means that the paper does not involve crowdsourcing nor research with human subjects.
        \item Including this information in the supplemental material is fine, but if the main contribution of the paper involves human subjects, then as much detail as possible should be included in the main paper. 
        \item According to the NeurIPS Code of Ethics, workers involved in data collection, curation, or other labor should be paid at least the minimum wage in the country of the data collector. 
    \end{itemize}

\item {\bf Institutional review board (IRB) approvals or equivalent for research with human subjects}
    \item[] Question: Does the paper describe potential risks incurred by study participants, whether such risks were disclosed to the subjects, and whether Institutional Review Board (IRB) approvals (or an equivalent approval/review based on the requirements of your country or institution) were obtained?
    \item[] Answer: \answerNA{} % Replace by \answerYes{}, \answerNo{}, or \answerNA{}.
    \item[] Justification: % \justificationTODO{}
    The paper does not involve crowdsourcing nor research with human subjects.
    \item[] Guidelines:
    \begin{itemize}
        \item The answer NA means that the paper does not involve crowdsourcing nor research with human subjects.
        \item Depending on the country in which research is conducted, IRB approval (or equivalent) may be required for any human subjects research. If you obtained IRB approval, you should clearly state this in the paper. 
        \item We recognize that the procedures for this may vary significantly between institutions and locations, and we expect authors to adhere to the NeurIPS Code of Ethics and the guidelines for their institution. 
        \item For initial submissions, do not include any information that would break anonymity (if applicable), such as the institution conducting the review.
    \end{itemize}

\item {\bf Declaration of LLM usage}
    \item[] Question: Does the paper describe the usage of LLMs if it is an important, original, or non-standard component of the core methods in this research? Note that if the LLM is used only for writing, editing, or formatting purposes and does not impact the core methodology, scientific rigorousness, or originality of the research, declaration is not required.
    %this research? 
    \item[] Answer: \answerNA{} % Replace by \answerYes{}, \answerNo{}, or \answerNA{}.
    \item[] Justification: % \justificationTODO{} 
   Core method development in this research does not involve LLMs as any important, original, or non-standard components.
    \item[] Guidelines:
    \begin{itemize}
        \item The answer NA means that the core method development in this research does not involve LLMs as any important, original, or non-standard components.
        \item Please refer to our LLM policy (\url{https://neurips.cc/Conferences/2025/LLM}) for what should or should not be described.
    \end{itemize}
    
\end{enumerate}